\numberwithin{equation}{section}
\newtheorem{theorem}{Theorem}[section]
\newtheorem{lemma}[theorem]{Lemma}
\theoremstyle{remark}
\newtheorem{remark}[theorem]{Remark}
\newtheorem{assumption}[theorem]{Assumption}
\crefname{assumption}{Assumption}{Assumptions}
\newcommand\R{\mathbb{R}}
\newcommand\bO{\mathbf{O}}
\newcommand\bW{\mathbf{W}}
\newcommand\bx{\mathbf{x}}
\newcommand\by{\mathbf{y}}
\newcommand\E{\mathcal{E}}
\newcommand\rT{\mathrm{T}}
\newcommand\bbE{\mathbb{E}}
\newcommand\gp{\mathrm{gp}}
\newcommand\bal{\mathrm{bal}}
\newcommand\param{\mathrm{par}}
\journal{Neural Networks}
\begin{document}

\begin{frontmatter}



\title{Balanced Group Convolution: An Improved Group Convolution Based on Approximability Estimates}


\author[a]{Youngkyu Lee}
\author[b]{Jongho Park}
\author[c]{Chang-Ock Lee}

\affiliation[a]{organization={Natural Science Research Institute, KAIST},
            state={Daejeon},
            postcode={34141}, 
            country={Korea}}
\affiliation[b]{organization={Computer, Electrical and Mathematical Sciences and Engineering Division, King Abdullah University of Science and Technology~(KAUST)},
state={Thuwal},
postcode={23955}, 
country={Saudi Arabia}}
\affiliation[c]{organization={Department of Mathematical Sciences, KAIST},
state={Daejeon},
postcode={34141}, 
country={Korea}}

\begin{abstract}
The performance of neural networks has been significantly improved by increasing the number of channels in convolutional layers.
However, this increase in performance comes with a higher computational cost, resulting in numerous studies focused on reducing it.
One promising approach to address this issue is group convolution, which effectively reduces the computational cost by grouping channels.
However, to the best of our knowledge, there has been no theoretical analysis on how well the group convolution approximates the standard convolution.
In this paper, we mathematically analyze the approximation of the group convolution to the standard convolution with respect to the number of groups.
Furthermore, we propose a novel variant of the group convolution called \textit{balanced group convolution}, which shows a higher approximation with a small additional computational cost.
We provide experimental results that validate our theoretical findings and demonstrate the superior performance of the balanced group convolution over other variants of group convolution.
\end{abstract}



\begin{keyword}


Convolutional layer \sep Group convolution \sep Approximability estimate
\end{keyword}

\end{frontmatter}


\section{Introduction}
\label{Sec:Int}
The convolutional layer plays a crucial role in the success of modern neural networks for image classification~\citep{he2016deep,hu2018squeeze,krizhevsky2012imagenet} and image processing problems~\citep{radford2015unsupervised,ronneberger2015u,zhang2017beyond}.
However, achieving high performance through convolutional neural networks~(CNNs) typically requires the use of a large number of channels~\citep{brock2021high,he2022approximation,huang2019gpipe,tan2019efficientnet,zagoruyko2016wide}, resulting in significant computational costs and long training times.
Accordingly, there have been many studies focusing on modifying the convolutional layer to reduce its computational complexity~\citep{gholami2021survey,krizhevsky2012imagenet,lee2022two,liu2020pruning,zhang2015accelerating,zhang2015efficient}.

Among them, group convolution is the most basic modification that can be easily thought of.
It was introduced in AlexNet~\citep{krizhevsky2012imagenet} as a distributed computing method of convolutional layers to resolve the memory shortage.
Group convolution divides the channels of each layer into groups and performs convolution only within each group.
CNNs with the group convolution succeeded in reducing the number of parameters and implementation time, but its performance drops rapidly as the number of groups increases~\citep{lee2022two,long2015fully}.
It has been speculated that this phenomenon is due to the fact that the lack of intergroup communication greatly affects the representation capacity of CNNs.
However, it has not yet been mathematically revealed how much the performance is degraded.

Recently, several modifications of the group convolution, which add an intergroup communication, have been considered to restore the performance \citep{chollet2017xception,lee2022two,long2015fully,zhang2018shufflenet}.
The channel shuffling structure used in ShuffleNet~\citep{zhang2018shufflenet} adds a permutation step among the output channels of the group convolution to make data exchange among groups.
This method is efficient in terms of memory usage because it does not use any additional parameters.
Learnable group convolution~\citep{Huang_2018_CVPR} determines channels to be grouped through learning.
This method generates the weight for the group convolution by overlaying a trainable mask on the weight of the standard convolution.
In fact, it is equivalent to changing the channel shuffling from a fixed rule to a learnable one.
Although these methods do not make sense when viewed as a single layer, they effectively restore the performance of CNNs that use multiple layers.
On the other hand, fully learnable group convolution~\citep{zhang2018shufflenet} introduces more parameters to additionally vary the number of channels in each group.
In~\citep{lee2022two}, it was observed that group convolution has a block diagonal matrix structure and two-level group convolution was introduced to collect representatives of each group to perform additional convolution with low computational cost.
These works have successfully resolved the performance degradation issue described above, but there is still no mathematical analysis of why the performance is improved.

In this paper, we rigorously analyze the performance of the group convolution as an approximation to the corresponding standard convolution with respect to the number of groups $N$.
To achieve this, we introduce a new measure of approximability~(see~\eqref{eqn:app_err}), which quantifies the optimal squared $\ell^2$-error between the outputs of the standard convolution and group convolution.
We prove that the approximability of the group convolution is estimated as $K(1 - 1/N)^{2}$, where $K$ denotes the number of parameters in a convolution layer that maps a single channel to a single channel~(see \cref{thm:1}).

In addition, we present a new variant of group convolution, called \textit{balanced group convolution}~(BGC), which achieves a theoretically improved approximability estimate compared to the plain group convolution.
In BGC, the intergroup mean is computed by averaging the channels between groups.
After that, we introduce an additional convolution to add the intergroup mean to the output of the group convolution.
This allows each group in BGC to utilize the entire information of the input, resulting in an improved approximability with a small additional computational cost.
Indeed, we prove that the approximability of BGC is estimated to be $K(1-1/N)^{3}$~(see \cref{thm:2}), which is a better result than the plain group convolution.
Furthermore, under an additional assumption, the approximability of the group convolution and BGC can be estimated to be $K(1-1/N)/n$ and $K(1-1/N)^{2}/n$, respectively, where $n$ is the number of input channels.
The superior performance of the proposed BGC is verified by various numerical experiments on several recent CNNs such as WideResNet~\citep{zagoruyko2016wide}, ResNeXt~\citep{xie2017aggregated}, MobileNetV2~\citep{sandler2018mobilenetv2}, and EfficientNet~\citep{tan2019efficientnet}.

In summary, we address the following issues in this work:
\begin{displayquote}
    \textit{Is it possible to obtain a rigorous estimate for the approximability of group convolution? Furthermore, can we develop an enhanced version of group convolution that achieves a theoretically better approximability than the original?}
\end{displayquote}

We summarize our main contributions in the followings:
\begin{itemize}
    \item We propose BGC, which is an enhanced variant of group convolution with an improved approximability estimate.
    \item We estimate the bounds on approximability of group convolution and BGC.
    \item We demonstrate the performance of BGC by embedding it into state-of-the-art CNNs.
\end{itemize}

The rest of this paper is organized as follows.
In \cref{Sec:GC}, we present preliminaries of this paper, specifically related to the group convolution.
We introduce the proposed BGC and its improved approximation properties in \cref{Sec:Main}, and establish connections to existing group convolution approaches in \cref{Sec:Comparison}.
Numerical validations of the theoretical results and improved classification accuracy of the proposed BGC applied to various state-of-the-art CNNs are presented in \cref{Sec:Num}.
We conclude this paper with remarks in \cref{Sec:Conc}.

\section{Group convolution}
\label{Sec:GC}
In this section, we introduce some basic notations for group convolution to be used throughout this paper.
Let $m$ and $n$ be positive integers and $N$ be a common divisor of $m$ and $n$.
We write $m_N = m/N$ and $n_N = n/N$.

We consider convolutional layers that operate on $n$ channels and produces $m$ channels.
Input $\bx$ of a convolutional layer is written as $\bx = (x_{1}, x_{2}, \dots, x_{n})$, where $x_{j}$ is the $j$th channel of $\bx$.
Similarly, output $\by$ is written as $\by = (y_{1}, y_{2}, \dots, y_{m})$, where $y_{i}$ is the $i$th channel of $\by$.
To represent a standard convolutional layer that takes input $\bx$ and produces output $\by$, we use the following matrix expression:
\begin{equation}
\label{eqn:full}
\begin{bmatrix} y_{1} \\ y_{2} \\ \vdots \\ y_{m} \end{bmatrix}
= \begin{bmatrix} W_{11} & W_{12} & \cdots & W_{1n} \\
    W_{21} & W_{22} & \cdots & W_{2n} \\
    \vdots & \vdots & \ddots & \vdots \\
    W_{m1} & W_{m2} & \cdots & W_{mn} \end{bmatrix}
\begin{bmatrix} x_{1} \\ x_{2} \\ \vdots \\ x_{n} \end{bmatrix},
\end{equation}
where $W_{ij}$ is a matrix that represents the generic convolution that maps $x_{j}$ to $y_{i}$.
We suppose that the channels of $\bx$ and $\by$ are evenly partitioned into $N$ groups.
Namely, let $\bx = (\bx^1, \bx^2, \dots, \bx^N)$ and $\by = (\by^1, \dots, \by^N)$, where
\begin{equation*}
    \bx^k = \begin{bmatrix} x_{(k-1)n_N + 1} \\ x_{(k-1)n_N + 2} \\ \vdots \\ x_{k n_N} \end{bmatrix}, \quad
    \by^k = \begin{bmatrix} y_{(k-1)m_N + 1} \\ y_{(k-1)m_N + 2} \\ \vdots \\ y_{k m_N} \end{bmatrix}.
\end{equation*}
Then~\eqref{eqn:full} is rewritten as the following block form:
\begin{equation}
\label{eqn:full_group}
    \begin{bmatrix} \by^1 \\ \by^2 \\ \vdots \\ \by^N \end{bmatrix}
    = \begin{bmatrix} \bW^{11} & \bW^{12} & \cdots & \bW^{1N} \\
        \bW^{21} & \bW^{22} & \cdots & \bW^{2N} \\
        \vdots & \vdots & \ddots & \vdots \\
        \bW^{N1} & \bW^{N2} & \cdots & \bW^{NN} \end{bmatrix}
    \begin{bmatrix} \bx^1 \\ \bx^2 \\ \vdots \\ \bx^N \end{bmatrix}.
\end{equation}

Group convolution, a computationally efficient version of the convolution proposed in~\citep{krizhevsky2012imagenet}, is formed by discarding intergroup connection in~\eqref{eqn:full_group}.
That is, we take only the block-diagonal part to obtain the group convolution, such as
\begin{equation}
\label{eqn:group}
    \begin{bmatrix} \by^1 \\ \by^2 \\ \vdots \\ \by^N \end{bmatrix}
    = \begin{bmatrix} \bW^{11} & \bO & \cdots & \bO \\
        \bO & \bW^{22} & \cdots & \bO \\
        \vdots & \vdots & \ddots & \vdots \\
        \bO & \bO & \cdots & \bW^{NN} \end{bmatrix}
    \begin{bmatrix} \bx^1 \\ \bx^2 \\ \vdots \\ \bx^N \end{bmatrix},
\end{equation}
where $\bO$ denotes the zero matrix.
We observe that the number of parameters in the group convolution~\eqref{eqn:group} is $1/N$ of that of the standard convolution~\eqref{eqn:full_group}.
Moreover, the representation matrix in~\eqref{eqn:group} is block-diagonal, allowing each block on the main diagonal to be processed in parallel, making it computationally more efficient than~\eqref{eqn:full_group}. 
However, when the number of parameters in the group convolution is reduced, the performance of a CNN using group convolutional layers may decrease compared to a CNN using standard convolutional layers~\citep{lee2022two,zhang2018shufflenet}.

\section{Main results}
\label{Sec:Main}
In this section, we present the main results of this paper, including the proposed BGC and estimates of its approximability and computational cost.

\subsection{Balanced group convolution}
A major drawback of the group convolution, which affects performance, is the lack of intergroup communication since the representation matrix given in~\eqref{eqn:group} is block-diagonal.
The motivation behind the proposed BGC is by adding intergroup communication to the group convolution with low computational cost to improve performance.
To achieve this, we utilize a representative vector of small dimension that captures all the information in the input $\bx$. 
Specifically, we use the mean $\bar{\bx}$ of $\{ \bx^k \}$, i.e.,
\begin{equation*}
    \bar{\bx} = \frac{1}{N}\sum_{k=1}^N \bx^k.
\end{equation*}
Then, the proposed BGC is defined by
\begin{equation}
    \label{eqn:balanced}
    \begin{bmatrix} \by^1 \\ \by^2 \\ \vdots \\ \by^N \end{bmatrix}
    = \begin{bmatrix} \bW^{11} & \bO & \cdots & \bO \\
        \bO & \bW^{22} & \cdots & \bO \\
        \vdots & \vdots & \ddots & \vdots \\
        \bO & \bO & \cdots & \bW^{NN} \end{bmatrix}
    \begin{bmatrix} \bx^1 \\ \bx^2 \\ \vdots \\ \bx^N \end{bmatrix}
    + \begin{bmatrix} \bar{\bW}^1 \\ \bar{\bW}^2 \\ \vdots \\ \bar{\bW}^N \end{bmatrix}
    \bar{\bx},
\end{equation}
where each $\bar{\bW}^k$ is a convolution that operates on $n_N$ channels and produces $m_N$ channels.
That is, BGC is an extension of the group convolution that incorporates an additional structure based on the intergroup mean $\bar{\bx}$, which serves as a balancing factor that utilizes the entire information of $\bx$ to intergroup communication.
Specifically, this additional structure allows BGC to better distribute feature representations across groups.

Since the number of parameters in BGC~\eqref{eqn:balanced} is $2/N$ of that of the standard convolution~\eqref{eqn:full_group}, the computational cost of BGC is significantly lower than that of the standard convolution. We will discuss this further in \cref{Sec:Main}.3.

\subsection{Approximability estimate}
Thanks to the additional structure in the proposed BGC, we may expect that it behaves more similarly to the standard convolution than the group convolution.
To provide a rigorous assessment of this behavior, we introduce a notion of approximability and prove that BGC has a better approximability to the standard convolution than the group convolution.

Suppose that the standard convolution $\bW = [\bW^{kl}]$ that maps $N$ groups to $N$ groups~(see~\eqref{eqn:full_group}) and the input $\bx = [\bx^k]$ are drawn from a certain probability distribution.
We define the approximability of the group convolution $\bW_{\gp}$ of the form~\eqref{eqn:group} as
\begin{equation}
    \label{eqn:app_err}
    \bbE_{\bW} \left[ \inf_{\bW_{\gp}} \bbE_{\bx} \left[ \| \bW \bx - \bW_{\gp} \bx \|^2 \right] \right],
\end{equation}
where $\| \cdot \|$ denotes the $\ell^2$-norm.
Namely, the approximability~\eqref{eqn:app_err} measures the $\bW$-expectation of the optimal $\bx$-averaged squared $\ell^2$-error between the output $\bW \bx$ of the standard convolution and the output $\bW_{\gp} \bx$ of the group convolution.
The approximability of BGC is defined in the same manner.

To estimate the approximability~\eqref{eqn:app_err}, we need the following assumptions on the distributions of $\bW$ and $\bx$.

\begin{assumption}
\label{ass:data}
A standard convolution $\bW = [W_{ij}]$ and an input $\bx = [x_{j}]$ are random variables that satisfy the following:
\begin{enumerate}[label=(\roman*)]
    \item $\{ W_{ij} \}$ are identically distributed.
    \item $\{ x_{j} \}$ are independent and identically distributed~(i.i.d.).
    \item $\bW$ and $\bx$ are independent.
\end{enumerate}
\end{assumption}
\cref{ass:data}(i) is essential to handle the off-diagonal parts of the group convolution when estimating~\eqref{eqn:app_err}.
To effectively compensate for the absence of the off-diagonal parts in the group convolution using $\bar{\bx}$, we need \cref{ass:data}(ii).
On the other hand, \cref{ass:data}(iii) is a quite natural assumption since it asserts that the convolution and input are independent of each other.
Furthermore, if we additionally assume to the parameters of the standard convolution, we can get better estimates on the approximability of group convolution and BGC.
This assumption is specified in~\cref{ass:input}.
\begin{assumption}
\label{ass:input}
A standard convolution $\bW = [W_{ij}]$ is a random variable such that $\{ W_{ij} \}$ are i.i.d. with $\bbE [ W_{ij} ] = 0$.
\end{assumption}
The independence of $W_{ij}$ is essential to obtain a sharper bound on the expectation of $\| \bW\bx \|^{2}$.
The condition $\bbE [ W_{ij} ] = 0$ usually occurs when the parameters of the convolutional layers are generated by Glorot~\citep{glorot10a} or He~\citep{he2015delving} initialization, which are i.i.d. samples from random variable with zero mean.

The following lemma presents a Young-type inequality for the standard convolution, which plays a key role in the proof of the main theorems.

\begin{lemma}
\label{lem:Young}
Let $\bW$ be a standard convolution that operates on $n$ channels and produces $m$ channels.
For any $n$-channel input $\bx$, we have
\begin{equation*}
\| \bW \bx \| \leq K^{\frac{1}{2}} \| \bW \|_{\param} \| \bx \|,
\end{equation*}
where $K$ is the number of parameters in a convolutional layer that maps a single channel to a single channel and $\| \bW \|_{\param}$ denotes the $\ell^2$-norm of the vector of parameters of the convolution $\bW$.
\end{lemma}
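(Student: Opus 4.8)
The plan is to reduce the multi-channel estimate to the classical Young inequality for convolution applied channel-by-channel, and then to assemble the channels using the Cauchy--Schwarz inequality twice. Let $w_{ij} \in \R^K$ denote the vector of kernel parameters of the single-channel convolution $W_{ij}$, so that $W_{ij} x_j = w_{ij} * x_j$ and $\norm{\bW}_{\param}^2 = \sum_{i=1}^m \sum_{j=1}^n \norm{w_{ij}}^2$, the total sum of squares of all parameters.

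The first step is the single-channel bound. Young's inequality for convolution gives $\norm{w_{ij} * x_j} \le \norm{w_{ij}}_1 \norm{x_j}$, and since $w_{ij}$ has exactly $K$ entries, Cauchy--Schwarz yields $\norm{w_{ij}}_1 \le K^{1/2} \norm{w_{ij}}$. Hence $\norm{W_{ij} x_j} \le K^{1/2} \norm{w_{ij}} \norm{x_j}$; this is precisely where the factor $K^{1/2}$ enters, namely from converting the $\ell^1$ norm of the kernel to its $\ell^2$ norm.

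Next, for each output channel $y_i = \sum_{j=1}^n W_{ij} x_j$, I would apply the triangle inequality followed by the single-channel bound to obtain $\norm{y_i} \le K^{1/2} \sum_{j=1}^n \norm{w_{ij}} \norm{x_j}$, and then use Cauchy--Schwarz over the index $j$ to separate the kernel norms from the input norms, giving $\norm{y_i}^2 \le K \big( \sum_{j=1}^n \norm{w_{ij}}^2 \big) \norm{\bx}^2$, where I have used $\sum_{j=1}^n \norm{x_j}^2 = \norm{\bx}^2$. Summing over $i$ then collapses the double sum $\sum_{i,j}\norm{w_{ij}}^2$ into $\norm{\bW}_{\param}^2$, so that $\norm{\bW\bx}^2 \le K \norm{\bW}_{\param}^2 \norm{\bx}^2$; taking square roots finishes the argument.

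The only genuinely delicate point is the single-channel Young inequality $\norm{w * x} \le \norm{w}_1 \norm{x}$, whose precise statement depends on the convolution convention (for instance circular convolution on a torus, or zero-padded/truncated convolution). I would fix the convention explicitly and invoke the corresponding standard form of Young's inequality, which holds uniformly in the spatial dimension so that the estimate is insensitive to whether the layer is $1$D, $2$D, or higher. Everything else is routine bookkeeping with the two applications of Cauchy--Schwarz described above.
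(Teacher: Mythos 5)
Your proof is correct, and your multi-channel assembly (triangle inequality per output channel, Cauchy--Schwarz over the input index $j$ to split $\sum_j \|w_{ij}\|\,\|x_j\|$, then summation over $i$ to collapse the double sum into $\|\bW\|_{\param}^2$) is exactly the paper's argument. Where you diverge is the single-channel estimate $\|W_{ij}x_j\| \le K^{1/2}\|w_{ij}\|\,\|x_j\|$: you obtain it by citing the classical Young inequality $\|w \ast x\|_2 \le \|w\|_1\|x\|_2$ and then converting $\|w\|_1 \le K^{1/2}\|w\|_2$, whereas the paper proves it from scratch. The paper writes $Wx = Xw$ with $X \in \R^{D\times K}$, using only the fact that the output is linear in the parameters; it observes that each product $w_k x_d$ occurs at most once in the expression for $Wx$, so every column $v_k$ of $X$ satisfies $\|v_k\| \le \|x\|$, and then applies the triangle and Cauchy--Schwarz inequalities to $\|Xw\|$. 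The two routes give the identical constant, but they buy slightly different things: the paper's counting argument is self-contained and automatically covers any \enquote{generic convolution} --- cross-correlation versus convolution, zero padding, stride, dilation --- since all of these preserve the at-most-once property, so no convention ever needs to be fixed; your route leans on a standard named inequality, which is shorter and more recognizable, but, as you rightly flag yourself, it requires fixing the convolution convention and, for the padded, truncated, or strided variants actually used in CNN layers, an additional (easy) remark that these outputs are restrictions of a full convolution of zero-extended signals and hence have no larger norm. With that remark made explicit, your proof is complete and fully equivalent to the paper's.
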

\begin{proof}
    We first prove the case $m = n =1$.
    Let $W$ be a convolution that maps a single channel to a single channel and $x \in \mathbb{R}^D$ be an input.
    Since the output $Wx$ of the convolution is linear with respect to the parameters of $W$, there exists a matrix $X \in \mathbb{R}^{D \times K}$ such that
    \begin{equation*}
        W x = X w,
    \end{equation*}
    where $w \in \mathbb{R}^K$ is the vector of parameters of $W$.
    Note that each entry of $X$ is also an entry of $x$.
    For each entry $x_d$~($1 \leq d \leq D$) of $x$ and each entry $w_k$~($1 \leq k \leq K$) of $w$, their product $w_k x_d$ appears at most one in the formulation of $W x$.
    Hence, the $k$th column $v_k$ of $X$ satisfies
    \begin{equation}
        \label{lem1:Young}
        \| v_k \| \leq \| x \|.
    \end{equation}
    By the triangle inequality, Cauchy--Schwarz inequality, and~\eqref{lem1:Young}, we obtain
    \begin{equation}
        \label{lem2:Young}
        \begin{split}
        \| W x \| &= \| X w \| \leq \sum_{k=1}^K \| v_k w_k \| \\
        &\leq \left( \sum_{k=1}^K \| v_k \|^2 \right)^{\frac{1}{2}} \left( \sum_{k=1}^K w_k^2 \right)^{\frac{1}{2}}
        \leq K^{\frac{1}{2}} \| W \|_{\param} \| x \|,
        \end{split}
    \end{equation}
    which completes the proof of the case $m = n = 1$.

    Next, we consider the general case.
    As in~\eqref{eqn:full}, we write $\bW = [W_{ij}]$ and $\bx = [x_{j}]$.
    By the triangle inequality,~\eqref{lem2:Young}, and the Cauchy--Schwarz inequality, we obtain
    \begin{align*}
        \| \bW \bx \|^2 &= \sum_{i=1}^m \left\| \sum_{j=1}^n W_{ij} x_{j} \right\|^2
        \leq \sum_{i=1}^m \left( \sum_{j=1}^n \| W_{ij} x_{j} \| \right)^2 \\
        &\leq K \sum_{i=1}^m \left( \sum_{j=1}^n \| W_{ij} \|_{\param} \|x_{j} \| \right)^2 \\
        &\leq K \sum_{i=1}^m \left( \sum_{j=1}^n \| W_{ij} \|_{\param}^2 \cdot \sum_{j=1}^n \| x_{j} \|^2 \right) \\
        &=  K \| \bW \|_{\param}^2 \| \bx \|^2,
    \end{align*}
    which is our desired result.
\end{proof}

A direct consequence of \cref{lem:Young} is that, if $\bW$ and $\bx$ are independent, then we have
\begin{equation}
\label{eqn:Young}
    \bbE_{\bW} \bbE_{\bx} \left[ \| \bW \bx \|^2 \right] \leq K \bbE \left[ \| \bW \|_{\param}^2 \right] \bbE \left[ \| \bx \|^2 \right].
\end{equation}
In the following lemma, we show that the above estimate can be improved up to a multiplicative factor $1/n$ if we additionally assume that $W_{ij}$ has zero mean and that some random variables are independent and/or identically distributed.

\begin{lemma}
    \label{lem:Young_improved}
    Let $\bW = [W_{ij}]$ be a standard convolution that operates on $n$ channels and produces $m$ channels, and let $\bx = [x_j]$ be an $n$-channel input.
    Assume that $\bW$ and $\bx$ satisfy the following:
    \begin{enumerate}[label=\emph{(\roman*)}]
        \item $\{ W_{ij} \}$ are i.i.d. with $\bbE [W_{ij}] = 0$.
        \item $\{ x_j \}$ are identically distributed.
        \item $\bW$ and $\bx$ are independent.
    \end{enumerate}
    Then we have
    \begin{equation*}
        \bbE_{\bW} \bbE_{\bx} \left[ \| \bW \bx \|^2 \right]
        \leq \frac{K}{n} \bbE \left[ \| \bW \|_{\param}^2 \right] \bbE \left[ \| \bx \|^2 \right].
    \end{equation*}
\end{lemma}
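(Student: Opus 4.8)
The plan is to expand $\norm{\bW\bx}^2$ into a sum over output channels and cross-channel inner products, use the zero-mean independence of the blocks $W_{ij}$ to annihilate all off-diagonal contributions, control each surviving diagonal term with the single-channel bound $\norm{Wx}^2 \le K\norm{W}_{\param}^2\norm{x}^2$ already established inside the proof of \cref{lem:Young}, and finally absorb the per-block expectations into $\bbE[\norm{\bW}_{\param}^2]$ and $\bbE[\norm{\bx}^2]$ via the identical-distribution hypotheses. The gain of the factor $1/n$ over \cref{lem:Young} will come entirely from the vanishing of the off-diagonal terms, which replaces a double sum over channel pairs by a single diagonal sum.

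Writing the $i$th output channel as $\sum_{j=1}^n W_{ij}x_j$, I would first expand
\begin{equation*}
\norm{\bW\bx}^2 = \sum_{i=1}^m \Big\langle \sum_{j=1}^n W_{ij}x_j,\ \sum_{j'=1}^n W_{ij'}x_{j'} \Big\rangle = \sum_{i=1}^m \sum_{j=1}^n \sum_{j'=1}^n \langle W_{ij}x_j,\ W_{ij'}x_{j'} \rangle .
\end{equation*}
Then I would take $\bbE_\bW$ after conditioning on $\bx$, which is permitted by hypothesis (iii). For a cross term with $j \ne j'$, the blocks $W_{ij}$ and $W_{ij'}$ are independent by (i); representing the convolution outputs linearly in their parameters as in the proof of \cref{lem:Young}, the inner product becomes a bilinear form in the two independent zero-mean parameter vectors, so its $\bW$-expectation factors and vanishes because $\bbE[W_{ij}] = 0$. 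Hence only the diagonal terms $j = j'$ survive, leaving $\bbE_\bW\bbE_\bx[\norm{\bW\bx}^2] = \sum_{i,j} \bbE_\bW\bbE_\bx[\norm{W_{ij}x_j}^2]$.

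For each diagonal term I would apply the $m=n=1$ estimate from \cref{lem:Young} together with independence (iii) to obtain $\bbE_\bW\bbE_\bx[\norm{W_{ij}x_j}^2] \le K\,\bbE[\norm{W_{ij}}_{\param}^2]\,\bbE[\norm{x_j}^2]$. By the identical-distribution assumptions, $\bbE[\norm{W_{ij}}_{\param}^2]$ does not depend on $(i,j)$ and equals $\frac{1}{mn}\bbE[\norm{\bW}_{\param}^2]$, while $\bbE[\norm{x_j}^2]$ does not depend on $j$ and equals $\frac{1}{n}\bbE[\norm{\bx}^2]$. Summing the $mn$ diagonal terms then produces the claimed bound $\frac{K}{n}\bbE[\norm{\bW}_{\param}^2]\bbE[\norm{\bx}^2]$. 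I expect the main obstacle to be the careful justification of the vanishing cross terms: one must verify that the decomposition $W_{ij}x_j = X_j w_{ij}$ makes each cross term genuinely bilinear in two independent parameter vectors, so that the expectation splits as a product of two mean-zero factors, rather than sharing randomness through $\bx$ --- a subtlety that conditioning on $\bx$ and hypothesis (iii) are precisely designed to resolve.
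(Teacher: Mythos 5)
Your proposal is correct and follows essentially the same route as the paper's proof: expand $\|\bW\bx\|^2$ into cross terms, use the zero-mean independence of the blocks $W_{ij}$ to annihilate the off-diagonal terms, bound each diagonal term via the single-channel estimate from \cref{lem:Young} together with independence of $\bW$ and $\bx$, and normalize using the identical-distribution hypotheses. The only cosmetic difference is that the paper first treats $m=1$ and then sums over output channels, whereas you handle general $m$ in one pass; the computations are identical.
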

\begin{proof}
We first prove the case $m = 1$.
For $i \neq j$, invoking~(i) yields
\begin{equation}
    \label{lem1:Young_improved}
    \bbE_{\bW} \bbE_{\bx} \left[ x_i^{\rT} W_{1i}^{\rT} W_{1j} x_j \right]
    = \bbE_{\bx} \left[ x_i^\rT  \bbE \left[ W_{1i}^{\rT}\right] \bbE \left[ W_{1j} \right] x_j \right] = 0.
\end{equation}
On the other hand, for $1 \leq j \leq n$, by~(iii) and~\eqref{eqn:Young}, we have
\begin{equation}
    \label{lem2:Young_improved}
    \bbE_{W_{1j}} \bbE_{x_j} \left[ \| W_{1j} x_j \|^2 \right]
    \leq K \bbE \left[ \| W_{1j} \|_{\param}^2 \right] \bbE \left[ \|x_j\|^2 \right]
    = \frac{K}{n^2} \bbE \left[ \| \bW \|_{\param}^2 \right] \bbE \left[ \| \bx \|^2 \right],
\end{equation}
where the last equality is due to (i) and (ii).
By~\eqref{lem1:Young_improved} and~\eqref{lem2:Young_improved}, we get
\begin{align*}
    \bbE_{\bW} \bbE_{\bx} \left[ \| \bW \bx \|^2 \right]
    &= \sum_{i=1}^n \sum_{j=1}^n \bbE_{\bW} \bbE_{\bx} \left[ x_i^{\rT} W_{1i}^{\rT} W_{1j} x_j \right] \\
    &\leq \sum_{j=1}^n \frac{K}{n^2} \bbE \left[ \| \bW \|_{\param}^2 \right] \bbE \left[ \| \bx \|^2 \right]
    = \frac{K}{n} \bbE \left[ \| \bW \|_{\param}^2 \right] \bbE \left[ \| \bx \|^2 \right],
\end{align*}
which completes the proof of the case $m = 1$.
The general case $m > 1$ can be shown by applying the case $m = 1$ to each $i$th output channel~($1 \leq i \leq m$) and then summing over all $i$.
\end{proof}

The representation matrix of the group convolution presented in~\eqref{eqn:group} becomes more sparse as the number of groups $N$ increases.
This suggests that the approximability of the group convolution decreases as $N$ increases.
However, to the best of our knowledge, there has been no theoretical analysis on this.
\cref{thm:1} presents the first theoretical result regarding the approximability of the group convolution.

\begin{theorem}
  \label{thm:1}
  For an $n$-channel input $\bx$, let $\bW \bx$ and $\bW_{\gp} \bx$ denote the outputs of the standard and group convolutions given in~\eqref{eqn:full_group} and~\eqref{eqn:group}, respectively.
  Under \cref{ass:data}, we have
  \begin{equation*}
  \bbE_{\bW} \left[ \inf_{\bW_{\gp}} \bbE_{\bx} \left[ \| \bW \bx - \bW_{\gp} \bx \|^2 \right] \right]
  \leq K \left( 1 - \frac{1}{N} \right)^2 \bbE \left[ \| \bW \|_{\param}^2 \right] \bbE \left[ \| \bx \|^2 \right].
  \end{equation*}
  In addition, if we further assume that \cref{ass:input} holds, then we have
  \begin{equation*}
  \bbE_{\bW} \left[ \inf_{\bW_{\gp}} \bbE_{\bx} \left[ \| \bW \bx - \bW_{\bal} \bx \|^2 \right] \right]
  \leq \frac{K}{n} \left( 1 - \frac{1}{N} \right) \bbE \left[ \| \bW \|_{\param}^2 \right] \bbE \left[ \| \bx \|^2 \right].
  \end{equation*}
\end{theorem}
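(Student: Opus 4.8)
The plan is to prove both bounds by exhibiting a single explicit competitor for the infimum rather than solving the minimization exactly. Since the left-hand side is an infimum over all group convolutions $\bW_{\gp}$, any fixed admissible choice already yields an upper bound, so no optimization in $\bW_{\gp}$ is really needed. The natural competitor is the block-diagonal truncation of $\bW$ itself, i.e., the group convolution whose $k$th diagonal block equals $\bW^{kk}$. With this choice the residual in the $k$th group is exactly the discarded off-diagonal part, $\by^k - \bW^{kk}\bx^k = \sum_{l \neq k} \bW^{kl} \bx^l$, so that
\[
\inf_{\bW_{\gp}} \bbE_{\bx}\left[ \| \bW\bx - \bW_{\gp}\bx \|^2 \right]
\leq \sum_{k=1}^N \bbE_{\bx}\left[ \Big\| \sum_{l \neq k} \bW^{kl} \bx^l \Big\|^2 \right].
\]

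First I would regard each group-$k$ residual as the output of the sub-convolution $[\bW^{kl}]_{l \neq k}$ acting on the $(N-1)n_N$ channels $[\bx^l]_{l \neq k}$ and producing $m_N$ channels. Taking $\bbE_{\bW}\bbE_{\bx}$, invoking the independence in \cref{ass:data}(iii), and applying the Young-type estimate \eqref{eqn:Young} termwise gives
\[
\bbE_{\bW}\bbE_{\bx}\left[ \Big\| \sum_{l \neq k} \bW^{kl} \bx^l \Big\|^2 \right]
\leq K\, \bbE\big[ \| [\bW^{kl}]_{l \neq k} \|_{\param}^2 \big]\, \bbE\big[ \| [\bx^l]_{l \neq k} \|^2 \big].
\]
The decisive step is the bookkeeping that converts these block norms into fractions of the full norms. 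Using that $\{ W_{ij} \}$ are identically distributed (\cref{ass:data}(i)), every channel pair carries the same expected squared parameter norm, so the row-$k$ off-diagonal block accounts for a fraction $(N-1)m_N n_N/(mn) = (N-1)/N^2$ of $\bbE[\| \bW \|_{\param}^2]$; likewise, by the i.i.d. hypothesis \cref{ass:data}(ii), the complementary input $[\bx^l]_{l \neq k}$ carries a fraction $(N-1)n_N/n = (N-1)/N$ of $\bbE[\| \bx \|^2]$. Multiplying these fractions, summing the $N$ identical group contributions, and simplifying $N \cdot (N-1)^2/N^3 = (1-1/N)^2$ yields the first bound.

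For the sharper bound I would run the identical argument but replace \eqref{eqn:Young} by the improved estimate \cref{lem:Young_improved}, whose hypotheses are met because the off-diagonal blocks form a sub-family of the i.i.d.\ zero-mean collection $\{ W_{ij} \}$ supplied by \cref{ass:input} and the inputs are identically distributed. This contributes an extra factor $1/((N-1)n_N)$ to each group term; carrying it through the same counting and using $1/n_N = N/n$ trades one power of $(1-1/N)$ for the prefactor $1/n$, producing $\frac{K}{n}(1-1/N)$. I expect the only genuinely delicate point to be this accounting rather than any single inequality: one must keep straight that the \emph{number of input channels} entering \cref{lem:Young_improved} is $(N-1)n_N$, not $n$, and confirm that the restriction of $\{ W_{ij} \}$ to the off-diagonal blocks still satisfies the independence and zero-mean conditions on which the cross-term cancellation in that lemma depends.
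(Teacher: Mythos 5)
Your proposal is correct and follows essentially the same route as the paper's own proof: the block-diagonal truncation of $\bW$ as the explicit competitor, the Young-type estimate (\cref{lem:Young} via~\eqref{eqn:Young}, upgraded to \cref{lem:Young_improved} for the sharper bound) applied to each row's off-diagonal residual viewed as a convolution on $(N-1)n_N$ channels, and the identical-distribution bookkeeping giving the fractions $(N-1)/N^2$ and $(N-1)/N$ of $\bbE[\|\bW\|_{\param}^2]$ and $\bbE[\|\bx\|^2]$. The delicate point you flag—that the channel count entering \cref{lem:Young_improved} is $n-n/N$, not $n$—is exactly what the paper encodes in its constant $C_{n,N,k}$ in~\eqref{C}, and your accounting reproduces both bounds correctly.
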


\begin{proof}
Take any standard convolution $\bW = [\bW^{kl}]$ and any input $\bx = [\bx^k]$ as in~\eqref{eqn:full_group}.
If we set
\begin{equation*}
    \bW_{\gp} = \begin{bmatrix} \bW^{11} & \bO & \cdots & \bO \\
        \bO & \bW^{22} & \cdots & \bO \\
        \vdots & \vdots & \ddots & \vdots \\
        \bO & \bO & \cdots & \bW^{NN} \end{bmatrix},
\end{equation*}
i.e., if we choose $\bW_{\gp}$ as the block-diagonal part of $\bW$, then we have
\begin{align}
    \label{thm1:1}
    \bbE_{\bW} \left[ \inf_{\bW_{\gp}} \bbE_{\bx} \left[ \| \bW \bx - \bW_{\gp} \bx \|^2 \right] \right]
    &\leq \bbE_{\bW} \bbE_{\bx} \left[ \| \bW \bx - \bW_{\gp} \bx \|^2 \right] \nonumber \\
    &= \sum_{k=1}^N \bbE_{\bW} \bbE_{\bx} \left[ \left\| \sum_{l \neq k} \bW^{kl} \bx^l \right\|^2 \right].
\end{align}
For each $k$, since the map $\bx \mapsto \sum_{l \neq k} \bW^{kl} \bx^l$ is a convolution that operates on $(n - n/N)$ channels~($\bx_k$ is excluded from the input), invoking \cref{lem:Young,lem:Young_improved} yields
\begin{equation}
    \label{thm2:1}
    \bbE_{\bW} \bbE_{\bx} \left[ \left\| \sum_{l \neq k} \bW^{kl} \bx^l \right\|^2 \right] \leq C_{n,N,k} \left( \sum_{l \neq k} \bbE \left[ \| \bW^{kl} \|_{\param}^2 \right] \right) \left( \sum_{l \neq k} \bbE \left[ \| \bx^l \|^2 \right] \right),
\end{equation}
where
\begin{equation}
\label{C}
C_{n,N,k} = \begin{cases}
K, & \quad \text{ under \cref{ass:data},} \\
\frac{K}{n-n/N}, & \quad \text{ under \cref{ass:data,ass:input}.}
\end{cases}
\end{equation}
Note that the independence condition between $\bW$ and $\bx$ are used in~\eqref{thm2:1}.

Meanwhile, \cref{ass:data} implies that
\begin{equation}
    \label{thm3:1}
    \bbE \left[ \| \bW^{kl} \|_{\param}^2 \right] = \frac{1}{N^2} \bbE \left[ \| \bW \|_{\param}^2 \right], \quad
    \bbE \left[ \| \bx^l \|^2 \right] = \frac{1}{N} \bbE \left[ \| \bx \|^2 \right].
\end{equation}
Finally, combination of~\eqref{thm1:1},~\eqref{thm2:1}, and~\eqref{thm3:1} yields
\begin{equation*}
    \bbE_{\bW} \left[ \inf_{\bW_{\gp}} \bbE_{\bx} \left[ \| \bW \bx - \bW_{\gp} \bx \|^2 \right] \right]
    \leq C_{n,N,k} \left( 1 - \frac{1}{N} \right)^2 \bbE \left[ \| \bW \|_{\param}^2 \right] \bbE \left[ \| \bx \|^2 \right],
\end{equation*}
which completes the proof.
\end{proof}

As discussed above, BGC~\eqref{eqn:balanced} has the additional structure that utilizes the intergroup mean $\bar{\bx}$ to compensate for the absence of the off-diagonal parts in the group convolution.
We obtain the main theoretical result summarized in \cref{thm:2}, showing that BGC achieves an improved approximability estimate compared to the group convolution.

\begin{theorem}
  \label{thm:2}
  For an $n$-channel input $\bx$, let $\bW \bx$ and $\bW_{\bal} \bx$ denote the outputs of the standard convolution and BGC given in~\eqref{eqn:full_group} and~\eqref{eqn:balanced}, respectively.
  Under \cref{ass:data}, we have
  \begin{equation*}
  \bbE_{\bW} \left[ \inf_{\bW_{\bal}} \bbE_{\bx} \left[ \| \bW \bx - \bW_{\bal} \bx \|^2 \right] \right]
  \leq K \left( 1 - \frac{1}{N} \right)^3 \bbE \left[ \| \bW \|_{\param}^2 \right] \bbE \left[ \| \bx \|^2 \right].
  \end{equation*}
  In addition, if we further assume that \cref{ass:input} holds, we have
  \begin{equation*}
  \bbE_{\bW} \left[ \inf_{\bW_{\bal}} \bbE_{\bx} \left[ \| \bW \bx - \bW_{\bal} \bx \|^2 \right] \right]
  \leq \frac{K}{n} \left( 1 - \frac{1}{N} \right)^{2} \bbE \left[ \| \bW \|_{\param}^2 \right] \bbE \left[ \| \bx \|^2 \right].
  \end{equation*}
\end{theorem}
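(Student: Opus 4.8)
The plan is to exploit the freedom in the infimum by fixing an explicit, well-chosen $\bW_{\bal}$ and then reducing the estimate to the machinery already developed for \cref{thm:1}. Given an arbitrary standard convolution $\bW = [\bW^{kl}]$, I would take the diagonal blocks of $\bW_{\bal}$ to coincide with those of $\bW$ and set the balancing weights to $\bar{\bW}^k = \sum_{l \neq k} \bW^{kl}$. The purpose of this choice is a telescoping identity: since $\bar{\bx} = \frac1N\sum_j \bx^j$, the $k$th output block of the error $\bW\bx - \bW_{\bal}\bx$ collapses to
\begin{equation*}
\sum_{l\neq k}\bW^{kl}\bx^l - \Big(\sum_{l\neq k}\bW^{kl}\Big)\bar{\bx} = \sum_{l\neq k}\bW^{kl}(\bx^l - \bar{\bx}).
\end{equation*}
This is precisely the residual controlled in \cref{thm:1}, but with each input group $\bx^l$ replaced by its centered version $\bx^l - \bar{\bx}$.

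Next I would rerun the argument of \cref{thm:1} on this residual. The map $[\bx^l]_{l\neq k}\mapsto \sum_{l\neq k}\bW^{kl}(\bx^l-\bar{\bx})$ is a convolution on $(n - n/N)$ channels, so applying \cref{lem:Young} (resp.\ \cref{lem:Young_improved} under the extra hypotheses) together with the independence of $\bW$ and $\bx$ from \cref{ass:data}(iii) yields the analogue of~\eqref{thm2:1},
\begin{equation*}
\bbE_\bW\bbE_\bx\Big[\big\|\textstyle\sum_{l\neq k}\bW^{kl}(\bx^l-\bar\bx)\big\|^2\Big] \le C_{n,N,k}\Big(\sum_{l\neq k}\bbE[\|\bW^{kl}\|_\param^2]\Big)\Big(\sum_{l\neq k}\bbE[\|\bx^l-\bar\bx\|^2]\Big),
\end{equation*}
with $C_{n,N,k}$ as in~\eqref{C}. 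The weight factor $\sum_{l\neq k}\bbE[\|\bW^{kl}\|_\param^2] = \frac{N-1}{N^2}\bbE[\|\bW\|_\param^2]$ is unchanged from \cref{thm:1} by \cref{ass:data}(i); to invoke \cref{lem:Young_improved} I must note that the restriction of $\bW$ to the off-diagonal blocks still consists of i.i.d.\ zero-mean entries and remains independent of the centered input.

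The only genuinely new computation is the input factor $\sum_{l\neq k}\bbE[\|\bx^l-\bar\bx\|^2]$, which is where the additional $(1-1/N)$ is harvested. Each scalar entry of $\bx^l-\bar\bx$ has the form $u_1 - \frac1N\sum_{i=1}^N u_i$ with $u_1,\dots,u_N$ i.i.d.\ (by \cref{ass:data}(ii)); its second moment is $(1-1/N)\,\mathrm{Var}(u_1)\le(1-1/N)\bbE[u_1^2]$. Summing over the entries of a group gives $\bbE[\|\bx^l-\bar\bx\|^2]\le(1-1/N)\bbE[\|\bx^l\|^2]$, hence
\begin{equation*}
\sum_{l\neq k}\bbE[\|\bx^l-\bar\bx\|^2]\le\Big(1-\tfrac1N\Big)\sum_{l\neq k}\bbE[\|\bx^l\|^2]=\Big(1-\tfrac1N\Big)^2\bbE[\|\bx\|^2],
\end{equation*}
one factor of $(1-1/N)$ coming from the centering and one from summing the $N-1$ groups, versus the single factor present in \cref{thm:1}. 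Combining the weight factor, this input factor, and $C_{n,N,k}$, each $k$ contributes $\frac{C_{n,N,k}(N-1)}{N^2}(1-1/N)^2\bbE[\|\bW\|_\param^2]\bbE[\|\bx\|^2]$; summing over the $N$ groups gives $K(1-1/N)^3$ under \cref{ass:data}, and $\frac Kn(1-1/N)^2$ once \cref{ass:input} upgrades $C_{n,N,k}$ to $K/(n-n/N)$.

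I expect the main obstacle to be the centering step rather than the bookkeeping: one must confirm that replacing $\bx^l$ by $\bx^l-\bar\bx$ preserves the hypotheses of \cref{lem:Young_improved} — in particular that the centered channels are still identically distributed, which follows by symmetry of the i.i.d.\ entries, and that independence of $\bW$ and the centered input survives — and must handle the covariances in $\bbE[\|\bx^l-\bar\bx\|^2]$ correctly, since $\bx^l-\bar\bx$ couples every group through the shared mean $\bar\bx$.
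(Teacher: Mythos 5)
Your proposal is correct and takes essentially the same approach as the paper's proof: the same choice of $\bW_{\bal}$ (diagonal blocks of $\bW$ together with $\bar{\bW}^k = \sum_{l \neq k} \bW^{kl}$), the same telescoping of the error to $\sum_{l \neq k} \bW^{kl}(\bx^l - \bar{\bx})$, and the same reuse of the \cref{thm:1} machinery on the centered input, correctly observing that only identical distribution of the channels (not independence) is needed for that step. The sole difference is cosmetic: you harvest the extra $(1-1/N)$ factor entry-wise from the variance of centered i.i.d.\ scalars, $\bbE\left[ \| \bx^l - \bar{\bx} \|^2 \right] \leq (1 - 1/N)\, \bbE\left[ \| \bx^l \|^2 \right]$, whereas the paper uses the equivalent global identity $\bbE\left[ \| \bx - \bar{\bx} \|^2 \right] = \bbE\left[ \| \bx \|^2 \right] - N \bbE\left[ \| \bar{\bx} \|^2 \right]$ combined with the i.i.d.\ bound $\bbE\left[ \| \bar{\bx} \|^2 \right] \geq \frac{1}{N^2} \bbE\left[ \| \bx \|^2 \right]$.
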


\begin{proof}
Take any standard convolution $\bW = [\bW^{kl}]$ and any input $\bx = [\bx^k]$ as in~\eqref{eqn:full_group}.
If we choose $\bW_{\bal}$ as
\begin{equation*}
    \bW_{\bal} \bx
    =
    \begin{bmatrix} \bW^{11} & \bO & \cdots & \bO \\
        \bO & \bW^{22} & \cdots & \bO \\
        \vdots & \vdots & \ddots & \vdots \\
        \bO & \bO & \cdots & \bW^{NN} \end{bmatrix}
    \begin{bmatrix} \bx^1 \\ \bx^2 \\ \vdots \\ \bx^N \end{bmatrix}
    + \begin{bmatrix} \sum_{l \neq 1} \bW^{1l} \\ \sum_{l \neq 2} \bW^{2l} \\ \vdots \\ \sum_{l \neq N} \bW^{Nl} \end{bmatrix}
    \bar{\bx},
\end{equation*}
then we get
\begin{equation*}
    \| \bW \bx - \bW_{\bal} \bx \|^2 = \sum_{k=1}^N \left\| \sum_{l \neq k} \bW^{kl} (\bx^l - \bar{\bx}) \right\|^2.
\end{equation*}
Note that, in the proof of~\cref{thm:1}, the independence condition of $\{ x_j \}$ was never used.
Hence, we can use the same argument as in the proof of \cref{thm:1} to obtain
\begin{equation}
\label{thm1:2}
    \begin{split}
        &\bbE_{\bW} \left[ \inf_{\bW_{\gp}} \bbE_{\bx} \left[ \| \bW \bx - \bW_{\gp} \bx \|^2 \right] \right] \\
        &\leq C_{n,N,k} \left( 1 - \frac{1}{N} \right)^2 \bbE \left[ \| \bW \|_{\param}^2 \right] \bbE \left[ \| \bx - \bar{\bx }\|^2 \right] \\
        &= C_{n,N,k} \left( 1 - \frac{1}{N} \right)^2 \bbE \left[ \| \bW \|_{\param}^2 \right] \left( \bbE \left[ \| \bx \|^2 \right] - N \bbE \left[ \| \bar{\bx} \|^2 \right] \right),
    \end{split}
\end{equation}
where $C_{n,N,k}$ was given in~\eqref{C}.
Meanwhile, since $\{ \bx^k \}$ are i.i.d., we have
\begin{equation}
\label{thm2:2}
    \bbE \left[ \| \bar{\bx} \|^2 \right] = \frac{1}{N^2} \bbE \left[ \| \bx \|^2 \right] + \frac{N-1}{N} \left\| \bbE \left[ \bx^1 \right] \right\|^2
    \geq \frac{1}{N^2} \bbE \left[ \| \bx \|^2 \right].
\end{equation}
Combining~\eqref{thm1:2} and~\eqref{thm2:2} yields the desired result.
\end{proof}

\begin{remark}
By \cref{ass:data}, the approximability of the group convolution has a bound of $K(1-1/N)^{2}$.
Adding \cref{ass:input} to this provides $K(1-1/N)/n$ bound.
Since $n \geq N$, the latter is a sharper bound.
On the other hand, in both cases, the approximability of BGC has sharper estimates than the group convolution by $(1-1/N)$ factor.
\end{remark}

\subsection{Computational cost}
\label{Sec:cost}
We now consider the computational cost for a single convolutional layer with the proposed BGC.
Let $D$ be the size of each input channel, i.e., $x_{j} \in \mathbb{R}^{D}$.
The number of scalar arithmetic operations required in a single operation of BGC is given by
\begin{equation}
    \label{cost}
    2N m_N n_N KD + n D = \frac{2KDmn}{N} + Dn.
\end{equation}
In the right-hand side of~\eqref{cost}, the first term corresponds to $2N$ blocks of convolutional layers in~\eqref{eqn:balanced}, and the second term corresponds to the computation of $\bar{\bx}$.
Noting that the standard convolution and the group convolution require $KDmn$ and $KDmn/N$ scalar arithmetic operations, respectively, we conclude that the computational cost of BGC is approximately $2/N$ of that of the standard convolution, but twice as much as that of the group convolution.

\section{Comparison with existing works}
\label{Sec:Comparison}
\begin{table}
\caption{Overview of various variants of group convolution~(GC),
where $n$ and $m$ denote the numbers of input and output channels, respectively, $N$ denotes the number of groups, and $K$ represents the number of parameters of the single-channel convolutional layer.}
\label{tab:existing}
\resizebox{\textwidth}{!}{
\begin{tabular}{lcccccc}
\toprule
Method & \begin{tabular}[c]{@{}c@{}}Intergroup\\ commun.\end{tabular} & \begin{tabular}[c]{@{}c@{}}\# of channels\\ per group\end{tabular} & \begin{tabular}[c]{@{}c@{}}Grouping\end{tabular} &  \# of parameters & \begin{tabular}[c]{@{}c@{}}Comput.\\cost\end{tabular} \\
\midrule \midrule
GC~\citep{krizhevsky2012imagenet} & No & $n/N$ & deterministic & $Kmn/N$ & $O(1/N)$ \\
Shuffle~\citep{zhang2018shufflenet} & Yes & $n/N$ & deterministic & $Kmn/N$ & $O(1/N)$ \\
Learnable GC~\citep{Huang_2018_CVPR} & Yes & $n / N$ & adaptive &  $2Kmn$ & greater \\
Fully learnable GC~\citep{Wang_2019_CVPR} & Yes & varies & adaptive &  $Kmn+Nn+Nm$ & greater \\
Two-level GC~\citep{lee2022two} & Yes & $n/N$ & deterministic & $Kmn/N+Kn+Nm$ & greater \\
\midrule
BGC & Yes& $n/N$ & deterministic & $2Kmn/N$ & $O(1/N)$ \\
\bottomrule                          
\end{tabular}
}
\end{table}

In this section, we compare BGC with other variants of group convolution.
Various methods have been proposed to improve the performance of group convolution by enabling intergroup communication: Shuffle~\citep{zhang2018shufflenet}, learnable group convolution~(LGC)~\citep{Huang_2018_CVPR}, fully learnable group convolution~(FLGC)~\citep{Wang_2019_CVPR}, and two-level group convolution~(TLGC)~\citep{lee2022two}.
In BGC, input and output channels are evenly partitioned into groups to ensure that all groups have the same computational cost, preventing computational bottlenecks.
This feature also applies to GC, Shuffle, LGC, and TLGC, but deterministically to BGC, i.e., it uses a fixed partition that is independent of convolution and input.
Hence, the computational cost for partitioning into blocks is less expensive than those of LGC and FLGC, which have different partitions for each convolution.

The number of parameters in BGC decreases at a rate of $O(1/N)$ as the number of groups $N$ increases, which is consistent with the rates of GC and Shuffle.
Consequently, the computational cost of BGC can be scaled by $1/N$, similar to GC and Shuffle.
This is contrast to LGC, FLGC, and TLGC, which have additional parameters that are not $O(1/N)$, making their computational costs greater than $O(1/N)$.
Note that the total computational cost of LGC, FLGC, and TLGC is $DKmn/N + (N-1)*(Kmn/N + mn/N +1)$, $DKmn/N +(n+m)N^{2} + Kmn/N$, and $DKmn/N + DKn + DNm$, respectively.

As discussed in \cref{thm:2}, the upper bound on the approximability of BGC is $K(1-1/N)^3$, which is better than GC in \cref{thm:1} by a factor of $(1-1/N)$.
To the best of our knowledge, this is the first rigorous analysis of the performance of a group convolution variant.
On the other hand, there is still no approximation theory for Shuffle, LGC, FLGC, and TLGC.

\cref{tab:existing} provides an overview of the comparison between BGC and the other variants of group convolution discussed above.
As shown in \cref{tab:existing}, Shuffle~\citep{zhang2018shufflenet} shares all the advantages of BGC except for the theoretical guarantee of the approximability.
However, BGC has another advantage over Shuffle.
While Shuffle relies on layer propagation to have intergroup communication and provide good accuracy, BGC incorporates intergroup communication in each layer, increasing accuracy even when the network is not deep enough.
This is particularly advantageous when BGC is applied to CNNs with wide but not deep layers, such as WideResNet~\citep{zagoruyko2016wide}. 
This assertion will be further supported by the numerical results in the next section.

\section{Numerical experiments}
\label{Sec:Num}
In this section, we present numerical results that verify our theoretical results and demonstrate the performance of the proposed BGC.
All programs were implemented in Python with PyTorch~\citep{paszke2019pytorch} and all computations were performed on a cluster equipped with Intel Xeon Gold 6240R
(2.4GHz, 24C) CPUs, NVIDIA RTX 3090 GPUs, and the operating system CentOS 7.

\subsection{Verification of the approximability estimates}
\label{Sub:Ver}
\begin{figure}
  \centering
  \begin{subfigure}{0.23\textwidth}
      \includegraphics[width=\textwidth]{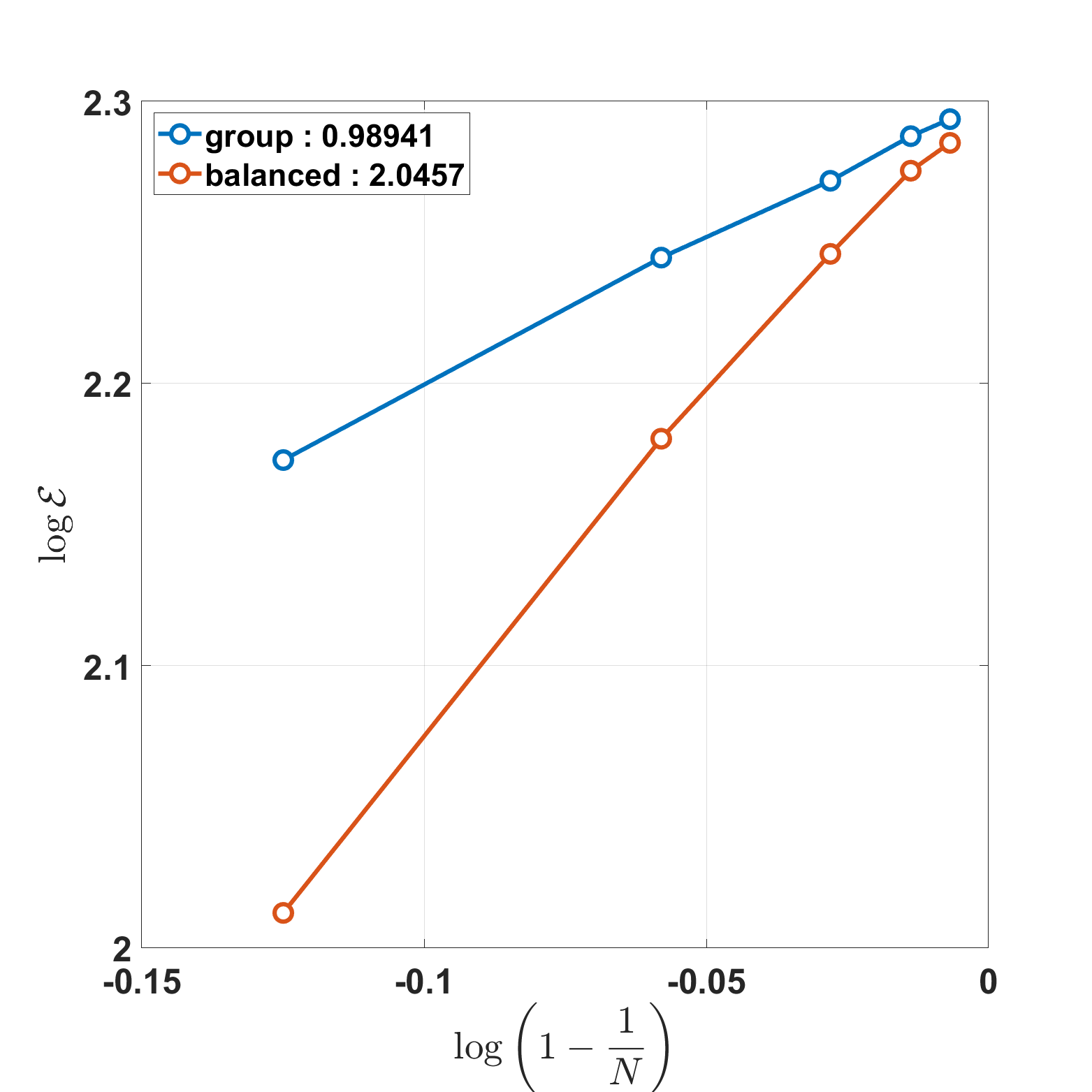}
      \caption{$S = 10^2$, $\mathcal{N}(0,1)$}
      \label{fig2:1}
  \end{subfigure}
  \hfill
  \begin{subfigure}{0.23\textwidth}
      \includegraphics[width=\textwidth]{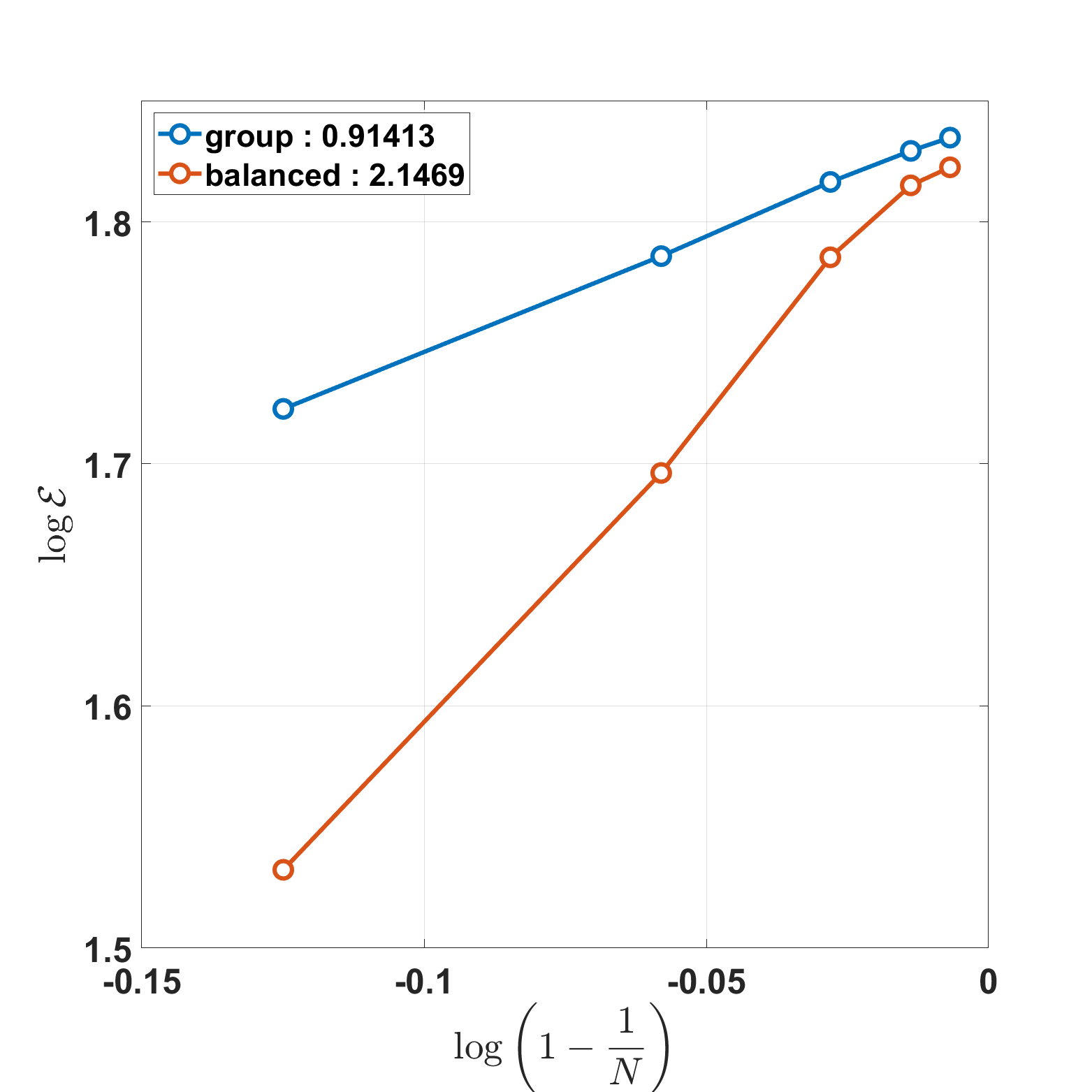}
      \caption{$S = 10^3$, $\mathcal{N}(0,1)$}
      \label{fig2:2}
  \end{subfigure}
  \hfill
  \begin{subfigure}{0.23\textwidth}
      \includegraphics[width=\textwidth]{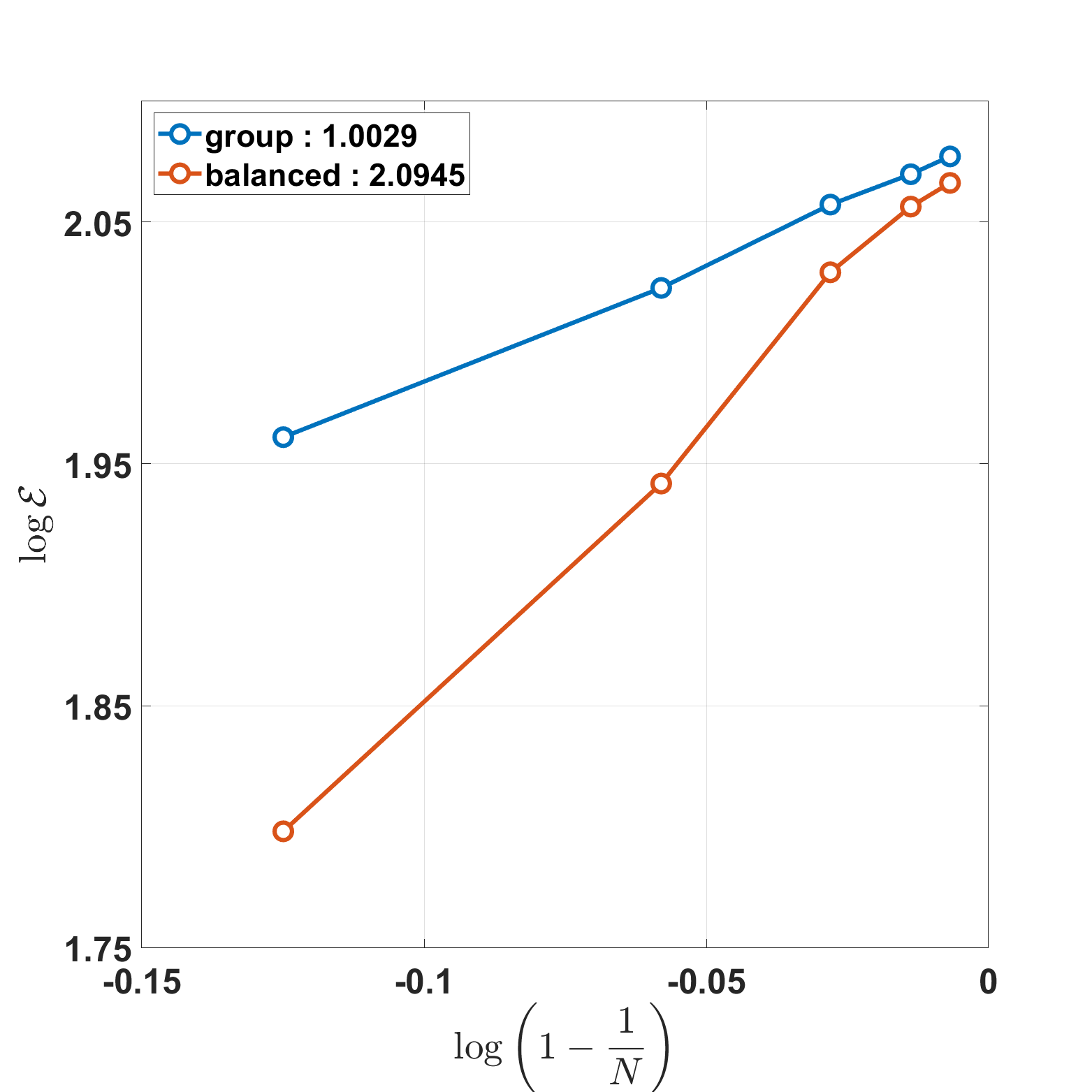}
      \caption{$S = 10^2$, $\mathcal{U}(-1,1)$}
      \label{fig2:4}
  \end{subfigure}
  \hfill
  \begin{subfigure}{0.23\textwidth}
    \includegraphics[width=\textwidth]{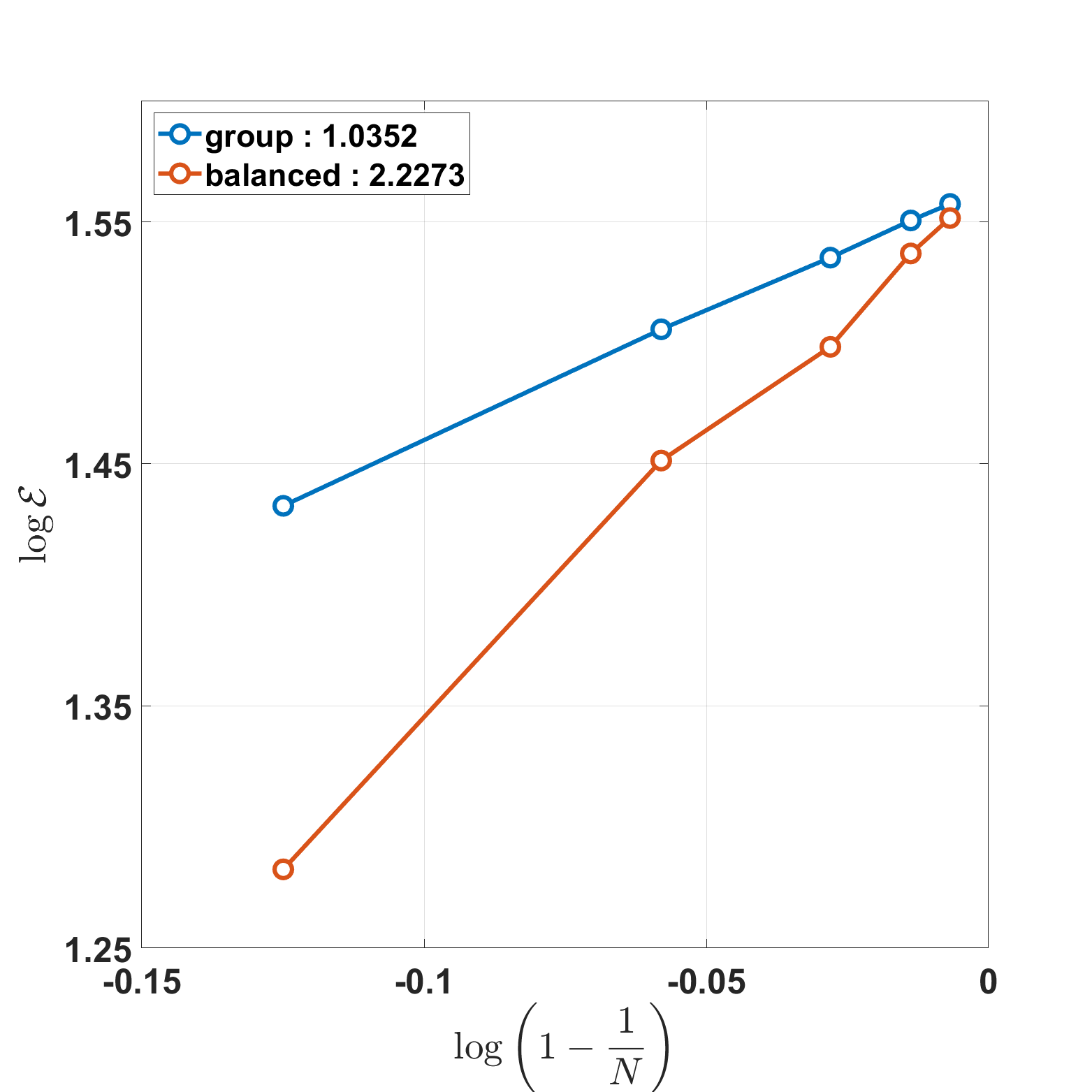}
    \caption{$S = 10^3$, $\mathcal{U}(-1,1)$}
    \label{fig2:5}
  \end{subfigure}          
  \caption{Graphs depicting $\log \E$ with respect to $\log (1 - 1/N)$ for GC and the proposed BGC, where $\E$ is the error measure given in~\eqref{eqn:lsq} and $N$ is the number of groups. We draw data points from either the normal distribution $\mathcal{N} (0, 1)$ or the uniform distribution $\mathcal{U} (-1, 1)$.
  The number in the legend indicates the average slope. Note that $N=2^{k}$, $k=2, \cdots, 6$.}
  \label{fig:scale}
  \end{figure}

\begin{figure}
  \centering
  \begin{subfigure}{0.23\textwidth}
      \includegraphics[width=\textwidth]{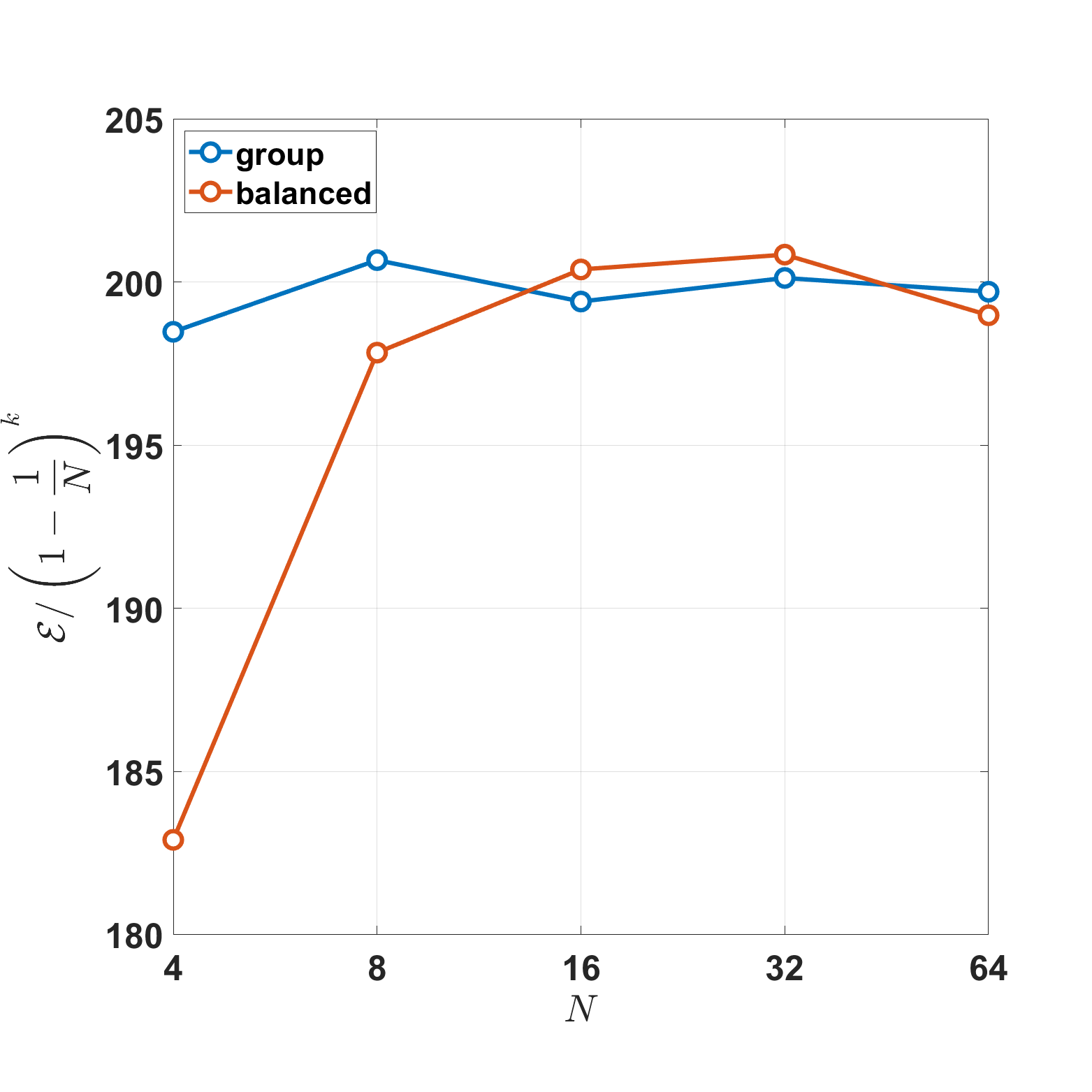}
      \caption{$S = 10^2$, $\mathcal{N}(0,1)$}
      \label{fig1:1}
  \end{subfigure}
  \hfill
  \begin{subfigure}{0.23\textwidth}
      \includegraphics[width=\textwidth]{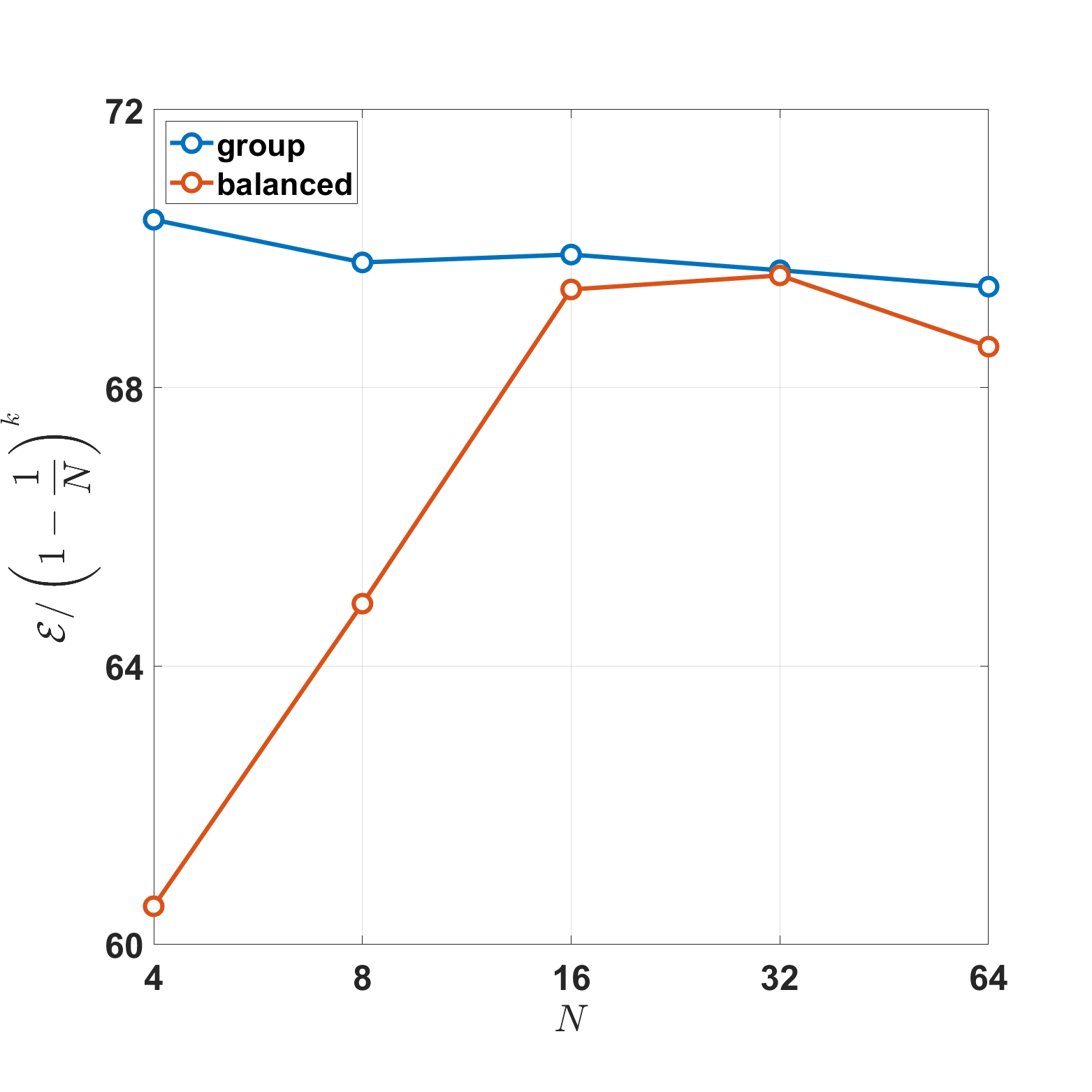}
      \caption{$S = 10^3$, $\mathcal{N}(0,1)$}
      \label{fig1:2}
  \end{subfigure}
  \hfill
  \begin{subfigure}{0.23\textwidth}
      \includegraphics[width=\textwidth]{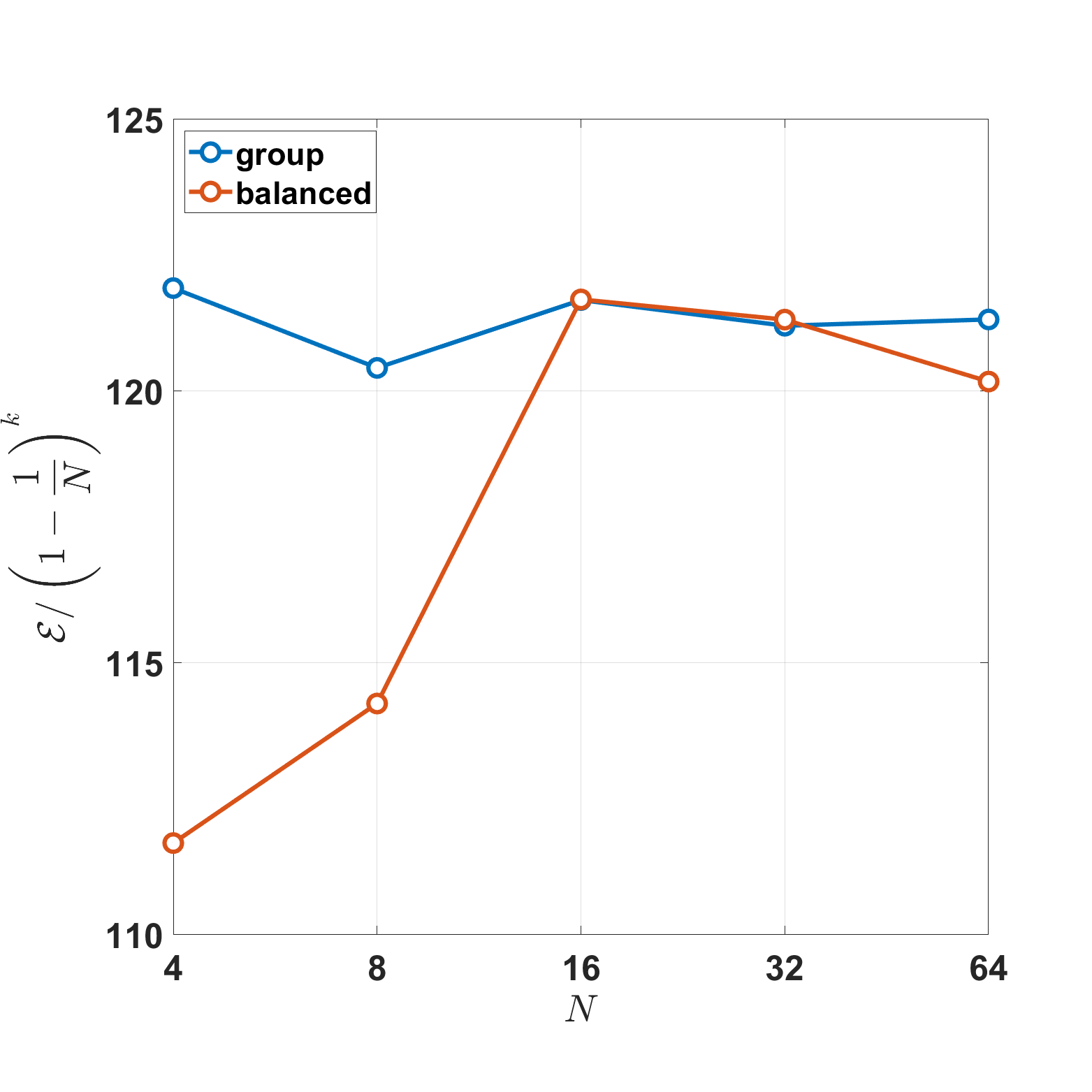}
      \caption{$S = 10^2$, $\mathcal{U}(-1,1)$}
      \label{fig1:4}
  \end{subfigure}
  \hfill
  \begin{subfigure}{0.23\textwidth}
    \includegraphics[width=\textwidth]{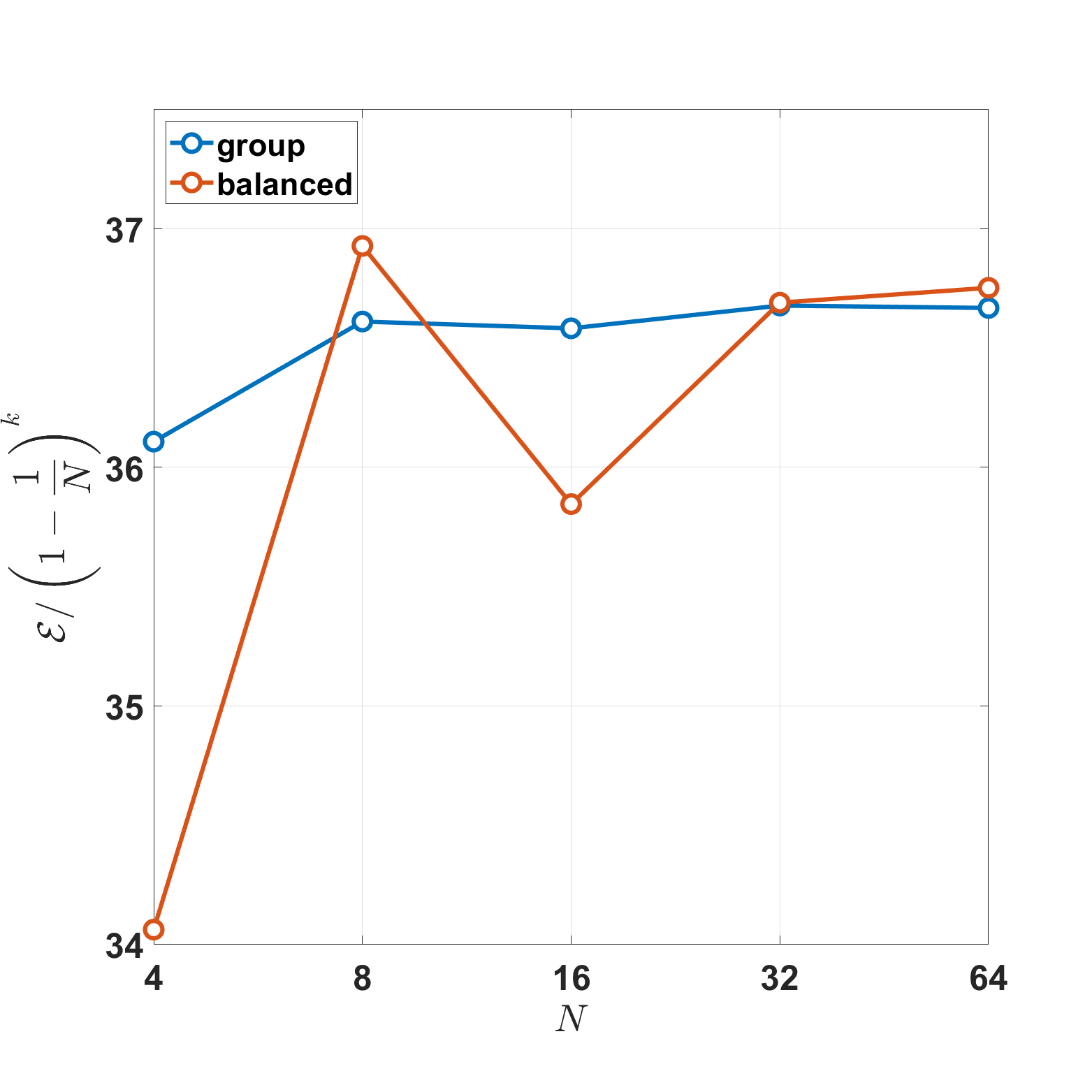}
    \caption{$S = 10^3$, $\mathcal{U}(-1,1)$}
    \label{fig1:5}
  \end{subfigure}          
  \caption{Graphs depicting $\mathrm{Rel.}\E / (1 - 1/N)^{p}$ with respect to the number of groups $N$ for GC~$(p=2)$ and the proposed BGC~$(p=3)$, where $\E$ is the error measure given in~\eqref{eqn:lsq}. We draw data points from either the normal distribution $\mathcal{N} (0, 1)$ or the uniform distribution $\mathcal{U} (-1, 1)$.}
  \label{fig:bound}
  \end{figure}

We verify the approximability estimates of GC and BGC, showing that the estimate on approximability of BGC is better than that of GC, shown in~\cref{thm:1,thm:2}.
We consider a set of one-dimensional standard convolutional layers $\{ \bW^{(t)} \}_{t=1}^{S}$ with $m = n = 256$ and $K = 3$, which is generated by He initialization~\citep{he2015delving}.
We also sample $S$ random data points $\{ \bx^{(s)} \}_{s=1}^{S}$ from either the normal distribution $\mathcal{N} (0, 1)$ or the uniform distribution $\mathcal{U} (-1, 1)$.
To measure the approximability of GC~$(\mathrm{m}=\gp)$ and BGC~$(\mathrm{m}=\bal)$ with respect to $\bW$, we use the following measure:
\begin{equation}
  \label{eqn:lsq}
  \E = \frac{1}{S} \sum_{t=1}^{S} \left( \min_{\bW_{\mathrm{m}}} \frac{1}{S} \sum_{s=1}^{S} \| \bW^{(t)} \bx^{(s)} - \bW_{\mathrm{m}} \bx^{(s)} \|^{2} \right),
\end{equation}
which can be evaluated by conventional least squares solvers~\citep{golub2013matrix}.

The graphs in \cref{fig:scale} depict $\log \E$ with respect to $\log (1 - 1/N)$ at various settings for GC and BGC.
One can readily observe linear relations between $\log \E$ and $\log (1 - 1/N)$ for both GC and BGC.
That is, we have an empirical formula
\begin{equation}
    \label{eqn:empirical}
    \E \approx C \left(1 - \frac{1}{N} \right)^{\gamma} \frac{1}{S}\sum_{t=1}^{S} \| \bW^{(t)} \|^{2} \frac{1}{S}\sum_{s=1}^{S} \| \bx^{(s)} \|^{2}
\end{equation}
for some positive constants $C$ and $\gamma$ independent of $N$.
From the graph, we see that $\gamma \approx 1$ for GC and $\gamma \approx 2$ for BGC, which confirm our theoretical results in~\cref{thm:1,thm:2}.
Next, we define a relative approximability
\begin{equation}
    \mathrm{Rel.}\E = \frac{\E}{\frac{1}{S}\sum_{t=1}^{S} \| \bW^{(t)} \|^{2} \frac{1}{S}\sum_{s=1}^{S} \| \bx^{(s)} \|^{2}}.
\end{equation}
In \cref{fig:bound}, we plot $\mathrm{Rel.}\E/(1-1/N)^{p}$ with respect to $N$ under various settings for GC and BGC, where $p=1$ for GC and $p=2$ for BGC.
We observe that all values of $\mathrm{Rel.}\E/(1-1/N)^{p}$ are bounded by $3.83 \times 10^{-3}$ independent of $N$, which implies the constant $C$ is less than $K/n \approx 11.72 \times 10^{-3}$ in~\eqref{eqn:empirical}.
This observation also agrees with the theoretical results in~\cref{thm:1,thm:2}.

\subsection{Embedding in recent CNNs}
In~\cref{Sub:Ver}, we verified that our approximability estimates are consistent with experimental results using synthetic data.
Now, we examine the classification performance of BGC applied to various recent CNNs, including ResNet~\citep{he2016deep}, WideResNet~\citep{zagoruyko2016wide}, ResNeXt~\citep{xie2017aggregated}, MobileNetV2~\citep{sandler2018mobilenetv2}, and EfficientNet~\citep{tan2019efficientnet}.
The network structures used in our experiments are described below:
\begin{itemize}
  \item \textbf{ResNet.} In our experiments, we used the ResNet-50, which was used for ImageNet classifications.
  It uses a bottleneck structure consisting of a $1 \times 1$ standard convolution, a group convolution with a $3 \times 3$ kernel, another $1 \times 1$ standard convolution, and a skip connection.
  \item \textbf{WideResNet.} In our experiments, we used two different structures of WideResNet: WideResNet-28-10 and WideResNet-34-2, which were used for CIFAR-10 and ImageNet classifications, respectively.
  It uses a residual unit consisting of two $3 \times 3$ convolutions and a skip connection.
  \item \textbf{ResNeXt.} It uses the same structure as the ResNet except using $3 \times 3$ group convolution instead of $3 \times 3$ convolution. In our experiments, we used ResNeXt-29 $(8 \times 64\text{d})$ for CIFAR-10 classification. We applied the group convolutions to two $1 \times 1$ standard convolutions.
  \item \textbf{MobileNetV2.} The basic structure of MobileNetV2~\citep{sandler2018mobilenetv2} is an inverted residual block, similar to the bottleneck in ResNeXt. However, MobileNetV2 uses a depthwise convolution~\citep{chollet2017xception} instead of ResNeXt's $3 \times 3$ group convolution. In our experiments, we applied group convolution to two $1 \times 1$ standard convolutions.
  \item \textbf{EfficientNet.} It is based on MnasNet~\citep{tan2019mnasnet}, a modification of MobileNetV2. Using an automated machine learning technique~\citep{zophneural2016}, EfficientNet proposes several model structures with appropriate numbers of channels and depth. It also has several variations, from b0 to b7 models, depending on the number of parameters. In our experiments, we used the most basic model b0.
\end{itemize}

\begin{table}
  \centering
  \caption{Details of CIFAR-10 and ImageNet datasets.}
  \label{tab:data}
  \begin{tabular}{ccccc}
  \toprule
  Dataset  & Image size & Classes & \# of training / validation samples \\ \midrule
  CIFAR-10 & $32 \times 32$ & 10 & $50{,}000$ / $10{,}000$ \\
  ImageNet & $224 \times 224$ & $1{,}000$ & $1{,}280{,}000$ / $50{,}000$ \\ \bottomrule
  \end{tabular}
\end{table}

We evaluate the performance of various variants of group convolution on the datasets CIFAR-10~\citep{krizhevsky2009learning} and ImageNet ILSVRC 2012 \citep{deng2009imagenet}.
Details on these datasets are in \cref{tab:data}.
In addition, for the CIFAR-10 dataset, a data augmentation technique in~\citep{lee2015deeply} was adopted for training; four pixels are padded on each side of images and $32 \times 32$ random crops are sampled from the padded images and their horizontal flips.
For the ImageNet dataset, input images of size $224 \times 224$ are randomly cropped from a resized image using the scale and aspect ratio augmentation of~\citep{szegedy2015going}, which was implemented by~\citep{gross2016training}.

\subsubsection{Ablation study through transfer learning}
First, we will apply GC and BGC to a pre-trained network on real data to see how well BGC works compared to GC.
We select a pre-trained ResNet-50 trained on the ImageNet dataset.
Note that the pre-trained parameters can be found in the PyTorch library~\citep{paszke2019pytorch}.
By transferring the parameters of the pre-trained standard convolution in ResNet-50, we obtained the parameters of~\eqref{eqn:group} and~\eqref{eqn:balanced}, which are referred to as ResNet-50-GC and ResNet-50-BGC, respectively.
We then further trained the ResNet-50-GC and ResNet-50-BGC for $30$ epochs using SGD optimizer with batch size $128$, learning rate $0.01$, Nesterov momentum $0.9$, and weight decay $0.0001$.

The classification performance of ResNet-50-GC and ResNet-50-BGC is given in~\cref{tab:transfer}.
Compared to GC, the classification performance of BGC improves by as little as 4\% and as much as 15\%.
Therefore, even for real data, it can be confirmed that BGC definitely complements the performance of the neural network degraded by GC.

\begin{table}
    \centering
    \caption{Classification errors~(\%) of ResNet-50-BC and ResNet-50-BGC on ImageNet dataset after transfer learning. Each network is transfer learned from the pre-trained ResNet-50 provided in PyTorch. Note that the case of $N=1$ is the classification error of ResNet-50.}
    \label{tab:transfer}
    \begin{tabular}{c|c|c}
    \toprule
    $N$  & ResNet-50-GC    & ResNet-50-BGC   \\ \midrule
    1  & \multicolumn{2}{c}{23.87} \\ \midrule
    2  & 26.54 & 22.45 \\
    4  & 36.70 & 25.51 \\
    8  & 43.72 & 31.13\\
    16 & 50.09 & 35.45 \\ \bottomrule
    \end{tabular}
\end{table}

\subsubsection{Computational efficiency}
To verify the computational efficiency of our BGC, we conducted experiments to increase the number of groups $N$ of $3 \times 3$ convolution mapping $1024$ channels to $1024$ channels with input $\bx \in \R^{128 \times 1024 \times 7 \times 7}$.
Note that this convolution is used for ResNet-50.
\cref{tab:perf} shows the total memory usage, computation time, and classification errors for forward and backward propagations of convolutions equipped with GC and BGC.
Note that the number of groups $N$ varies from $2$ to $16$.
The results are computed on a single GPU.
First, looking at the total memory usage, BGC uses more memory than GC, but the gap narrows as $N$ increases.
The additional memory usage of BGC occurs when computing the mean $\bar{\bx}$ and the convolution $\bar{\bW}$ defined in~\eqref{eqn:balanced}.
On the other hand, looking at computation time, BGC is slower than GC, but faster than the standard convolution when $N>2$.
As can be seen in~\eqref{cost}, compared to GC, BGC requires more computation time because it performs convolution twice, but as $N$ increases, the time decreases.
Moreover, we can see that the cost of calculating the mean $\bar{\bx}$ is small enough that it has little effect on the total computation time.
Eventually, these results suggest that while BGC increases memory consumption and computation time compared to GC, it can improve performance with only a small increase in the computational cost when dealing with large channels in group convolution.

\begin{table}
\centering
\caption{Total memory usage, computation time, and classification errors~(\%) for forward and backward propagations of $3 \times 3$ convolution from $1024$ channels to $1024$ channels equipped with GC and BGC and the input $\bx \in \R^{128 \times 1024 \times 7\times 7}$.
Note that the case of $N=1$ is computed with the standard convolution.}
\label{tab:perf}
\resizebox{\textwidth}{!}{
\begin{tabular}{c|cc|cc|cc}
\toprule
\multirow{2}{*}{N} & \multicolumn{2}{c|}{Total memory usage~(Mb)} & \multicolumn{2}{c|}{Computation time~(ms)} & \multicolumn{2}{c}{Classification error~(\%)} \\ \cmidrule(l){2-7} 
 & GC & BGC & GC & BGC & GC & BGC \\ \midrule\midrule
1 & \multicolumn{2}{c|}{60.50} & \multicolumn{2}{c|}{7.979} & \multicolumn{2}{c}{23.87} \\ \midrule
2 & 42.50 & 75.75 & 9.056 & 10.467 & 26.54 & 22.45 \\
4 & 33.50 & 48.62 & 4.449 & 4.790 & 36.70 & 25.51 \\
8 & 29.00 & 37.06 & 2.120 & 2.648 & 43.72 & 31.13 \\
16 & 26.75 & 31.53 & 2.265 & 2.557 & 50.09 & 35.45 \\ \bottomrule
\end{tabular}
}
\end{table}

\subsubsection{Comparison with existing approaches}

\begin{table}
  \centering
  \caption{Classification errors~(\%) of EfficientNet-b0, WideResNet-28-10, and ResNeXt-29 $(8 \times 64\text{d})$ on the CIFAR-10 dataset equipped with GC, Shuffle, fully learnable group convolution~(FLGC), two-level group convolution~(TLGC), and BGC.
  The case of $N=1$ corresponds to the standard convolution~(SC).
  Note that the EfficientNet-b0 could not increase $N$ up to $16$ due to its structure, so is was implemented only up to $8$.}
  \label{tab:comp}
  \begin{tabular}{c|c|ccc}
    \toprule
    $N$ & Method & EfficientNet-b0 & WideResNet-28-10 & ResNeXt-29 ($8 \times 64\text{d}$) \\ \midrule\midrule
    1 & SC & 8.42 & 3.88 & 5.32 \\ \midrule
    \multirow{5}{*}{2}  & GC & 10.02 & 4.33 & 5.92 \\
                        & Shuffle & 8.72 & 4.00 & \textbf{4.52} \\
                        & FLGC & 8.63 & 4.62 & 4.68 \\
                        & TLGC & 8.93 & 4.12 & 5.16 \\
                        & BGC & \textbf{7.16} & \textbf{3.75} & 5.84 \\ \midrule
    \multirow{5}{*}{4}  & GC & 11.25 & 5.44 & 6.76 \\
                        & Shuffle & 9.41 & 4.21 & \textbf{4.17} \\
                        & FLGC & 9.70 & 6.16 & 5.16 \\
                        & TLGC & 9.42 & 4.34 & 6.92 \\
                        & BGC & \textbf{7.50} & \textbf{4.02} & 5.84 \\ \bottomrule
    \end{tabular}
  \begin{tabular}{c|c|ccc}
    \toprule
    $N$ & Method & EfficientNet-b0 & WideResNet-28-10 & ResNeXt-29 ($8 \times 64\text{d}$) \\ \midrule\midrule
    1 & SC & 8.42 & 3.88 & 5.32 \\ \midrule
    \multirow{5}{*}{8}  & GC & 13.14 & 6.43 & 6.52 \\
                        & Shuffle & 10.65 & 4.73 & \textbf{4.81} \\
                        & FLGC & 10.49 & 9.81 & 5.25 \\
                        & TLGC & 10.68 & 4.79 & 6.56 \\
                        & BGC & \textbf{7.80} & \textbf{4.22} & 6.00 \\ \midrule
    \multirow{5}{*}{16} & GC & \multirow{5}{*}{-} & 8.47 & 6.92 \\
                        & Shuffle & & 5.28 & \textbf{4.89} \\
                        & FLGC & & 11.26 & 6.11 \\
                        & TLGC & & 4.95 & 6.08 \\
                        & BGC & & \textbf{4.76} & 5.60 \\ \bottomrule
    \end{tabular}
\end{table}

\begin{table}
  \centering
  \caption{Classification errors~(\%) of EfficientNet-b0, WideResNet-34-2, and MobileNetV2 on the ImageNet dataset equipped with GC, Shuffle, fully learnable group convolution~(FLGC), two-level group convolution~(TLGC), and BGC.
  The case of $N=1$ corresponds to the standard convolution~(SC) equipped to CNNs.}
  \label{tab:comp2}
  \begin{tabular}{c|c|ccc}
  \toprule
  $N$ & Method & EfficientNet-b0 & WideResNet-34-2 & MobileNetV2 \\ \midrule \midrule
  1 & SC & 30.68 & 24.99 & 34.13 \\ \midrule
  \multirow{5}{*}{2} & GC & 35.18 & 26.92 & 39.50 \\
                     & Shuffle & 33.91 & 25.12 & 36.67 \\
                     & FLGC & 35.53 & 30.53 & 37.98 \\
                     & TLGC & 33.50 & 25.36 & \textbf{33.56} \\
                     & BGC & \textbf{31.82} & \textbf{24.19} & 36.10 \\ \midrule
  \multirow{5}{*}{4} & GC & 41.64 & 31.96 & 46.15 \\
                     & Shuffle & 38.02 & 26.84 & 38.14 \\
                     & FLGC & 40.71 & 38.30 & 44.37 \\
                     & TLGC & 36.38 & 27.17 & \textbf{36.88} \\
                     & BGC & \textbf{33.78} & \textbf{25.36} & 37.82 \\ \midrule
  \multirow{5}{*}{8} & GC & 48.42 & 36.70 & 51.97 \\
                     & Shuffle & 42.74 & 29.03 & 42.75 \\
                     & FLGC & 45.38 & 45.54 & 49.28 \\
                     & TLGC & 38.66 & 28.43 & \textbf{39.16} \\
                     & BGC & \textbf{38.13} & \textbf{27.16} & 42.65 \\ \bottomrule
  \end{tabular}
\end{table}

As benchmark group convolution variants, we choose GC~\citep{krizhevsky2012imagenet}, Shuffle~\citep{zhang2018shufflenet}, FLGC~\citep{Wang_2019_CVPR}, and TLGC~\citep{lee2022two}, which were discussed in \cref{Sec:Comparison}.
All neural networks for the CIFAR-10 dataset in this section were trained using stochastic gradient descent with batch size $128$, weight decay $0.0005$, Nesterov momentum $0.9$,
total epoch $200$, and weights initialized as in~\citep{he2015delving}.
The initial learning rate was set to $0.1$ and was reduced to its one tenth in the 60th, 120th, and 160th epochs.
For ImageNet, the hyperparameter settings are the same as the CIFAR case, except for the weight decay $0.0001$, total epoch $90$, and the learning rate reduced by a factor of 10 in the 30th and 60th epochs.

The classification errors of BGC, along with other benchmarks, applied to various CNNs on the CIFAR-10 dataset are presented in \cref{tab:comp}.
BGC shows better overall results than other methods.
In particular, the application of BGC significantly improves the classification performance of EfficientNet-b0.
In addition, the classification errors of BGC for WideResNet-28-10 are always less than 5\% when $N$ varies up to $16$.
Although Shuffle performed the best for ResNeXt-29, BGC still outperforms the classification performance of GC.
To further validate the performance of BGC, we report the classification results on the ImageNet dataset in \cref{tab:comp2}.
For this dataset, we observe that BGC outperforms other benchmark group convolution variants for CNN architectures except MobileNetV2.
Therefore, through several experiments, we conclude that BGC is an effective and theoretically guaranteed alternative to group convolution for various CNN architectures on large-scale datasets.

\section{Conclusion}
\label{Sec:Conc}
In this paper, we proposed a novel variant of group convolution called BGC.
We constructed BGC by combining the plain group convolution structure with a balancing term defined as the intergroup mean to improve intergroup communication.
We designed a new measure~\eqref{eqn:app_err} to assess the approximability of group convolution variants and proved that the approximability of group convolution is bounded by $K(1-1/N)^{2}$.
Also, we showed that the bound on approximability of proposed BGC is $K(1-1/N)^{3}$, which is an improved bound compared to the group convolution.
Moreover, under the additional assumption about the parameters of the standard convolution, we showed that the bounds for the approximability of group convolution and BGC are $K(1-1/N)/n$ and $K(1-1/N)^{2}/n$, respectively.
Numerical experiments with various CNNs such as WideResNet, MobileNetV2, ResNeXt, and EfficientNet have demonstrated the practical efficacy of BGC on large-scale neural networks and datasets.

We conclude this paper with a remark on BGC.
A major drawback of the proposed BGC is that it requires full data communication among groups.
This means that when computing the intergroup mean $\bar{\bx}$ that appears in the balancing term, we need the entire input $\bx$.
This high volume of communication can be a bottleneck in parallel computation, which limits the performance of the model in distributed memory systems.
We note that TLGC~\citep{lee2022two} has addressed this issue by minimizing the amount of communication required.
Exploring how to improve BGC by reducing communication in a similar manner to~\citep{lee2022two}, while maintaining strong performance in both theory and practice, is considered as a future work.

\section*{Acknowledgments}
This work was supported in part by the National Research Foundation~(NRF) of Korea grant funded by the Korea government~(MSIT) (No.~RS-2023-00208914),
and in part by Basic Science Research Program through NRF funded by the Ministry of Education (No.~2019R1A6A1A10073887).



\bibliographystyle{elsarticle-harv} 
\bibliography{bgc}





\end{document}